\newcommand{\optionaldisplay}[1]{\[#1\]}
\title{Autoencoding Dynamics: Topological Limitations and Capabilities}
\newcommand{\MK}[1]{[{\color{magenta}(M.D.K.) \color{cyan}\textsc{#1}}]}
\newcommand{\EDS}[1]{{\color{blue}(E.D.S.) #1}}
\renewcommand{\MK}[1]{}
\renewcommand{\EDS}[1]{}
\newcommand{\concept}[1]{\textbf{#1}}
\newcommand{\N}{\mathbb{N}}
\newcommand{\R}{\mathbb{R}}
\newcommand{\id}{\textnormal{id}}
\newcommand{\cl}[1]{\textnormal{cl}(#1)}
\newcommand{\interiortM}[1]{\textnormal{int}_{\tilde{M}}(#1)}
\newcommand{\D}{\mathcal{D}}
\newcommand{\E}{\mathcal{E}}
\newcommand{\V}{\mathcal{V}}
\newcommand{\DE}{D\circ E}
\newcommand{\DiEi}{D_i\circ E_i}
\newcommand{\sph}{\mathbb{S}}
\newtheorem{Th}{Theorem}
\newtheorem{Co}{Corollary}
\newtheorem{Prop}{Proposition}
\newtheorem*{Quest-non}{Question}
\newtheorem{Rem}{Remark}
\newcommand{\pic}[2]{\includegraphics[scale=#1]{FIGURES/#2}}
\newcommand{\picc}[2]{\begin{center}\pic{#1}{#2}\end{center}}
\begin{document}
\begin{abstract}
\noindent
Given a ``data manifold'' $M\subset \R^n$ and ``latent space'' $\R^\ell$, an autoencoder is a pair of continuous maps consisting of an ``encoder'' $E\colon \R^n\to \R^\ell$ and ``decoder'' $D\colon \R^\ell\to \R^n$ such that the  ``round trip'' map $\DE|_M$ is as close as possible to the identity map $\id_M$ on $M$.
We present various topological limitations and capabilites inherent to the search for an autoencoder, and
describe capabilities for autoencoding dynamical systems having $M$ as an invariant manifold.
\end{abstract}

\maketitle

\begin{center}
  Matthew D. Kvalheim\\
  Department of Mathematics and Statistics\\
  University of Maryland, Baltimore County, MD, USA\\
{\sc email:} kvalheim@umbc.edu\\
{\sc orcid:} 0000-0002-2662-6760

\medskip
  Eduardo D. Sontag\\
  Departments of Electrical and Computer Engineering and Bioengineering\\
  Affiliate, Departments of Mathematics and Chemical Engineering\\
  Northeastern University, Boston, MA, USA\\
{\sc email:} e.sontag@northestern.edu\
{\sc orcid:} 0000-0002-2662-6760

\end{center}

\medskip
\centerline{{\sc keywords:}  autoencoders, dynamical systems, encoding dynamics, differential geometry}

\section{Introduction}

Many natural and engineered dynamical systems evolve on manifolds of intrinsically low dimension, even when the observed data lie in extremely high-dimensional ambient spaces. For example, a simple pendulum can be fully described by its angular position and velocity---two state variables---yet a video sequence of that pendulum might consist of hundreds of thousands of pixels per frame. In such cases, the observed dynamics are governed by a low-dimensional latent structure embedded nonlinearly within the high-dimensional observation space. Learning this underlying manifold and the associated dynamical laws that govern motion on it, and representing these on an explicit manifold---for example, an Euclidean space (Fig.~\ref{fig:dynamic_auto})--- is essential for compression, prediction, control, and scientific understanding. Recent advances in representation learning---particularly autoencoders and related neural architectures---have provided powerful new tools for identifying these intrinsic coordinates directly from data, without prior knowledge of the governing equations or state variables. Notably, \cite{Chen2022} demonstrated automated discovery of fundamental variables hidden in experimental data, revealing how deep learning can uncover intrinsic dynamical coordinates even from complex visual observations such as pendulums, swinging ropes, or flames.

\begin{figure}[ht]
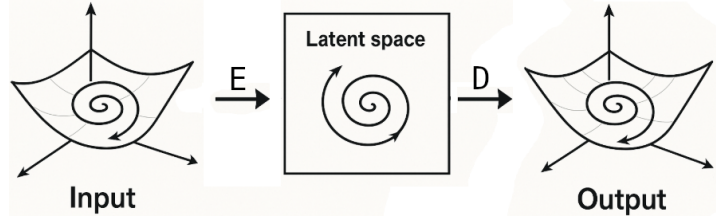

\picc{0.3}{autoencoder_dynamics_rev.png}
\caption{Encoding manifold dynamics into latent Euclidean space, followed by decoding.}
\label{fig:dynamic_auto}
\end{figure}

The ability to infer low-dimensional dynamical manifolds from high-dimensional data has become central to modern data-driven modeling and system identification. Methods such as Koopman operator learning~\cite{Brunton2016,Lusch2018}, deep and variational autoencoders~\cite{Hinton2006,Kingma2014}, and physics-informed neural networks~\cite{Raissi2019} have been employed to learn latent coordinates that both compactly represent observations and evolve according to smooth, often interpretable, dynamical laws. Such representations are particularly valuable when direct measurement of state variables is infeasible---as in fluid flows, neural recordings, or robotic perception---yet one wishes to recover governing equations or perform control in reduced-order coordinates. By combining manifold learning with dynamical consistency, autoencoder-based frameworks bridge classical system identification with modern deep learning, offering a data-driven pathway to reconstruct state-space models and uncover the geometry of complex dynamical phenomena.

From a theoretical standpoint, this paper addresses a foundational question underlying these empirical advances: under what conditions can a low-dimensional dynamical system be faithfully represented through an autoencoder structure? We show that, at least for small times, given either discrete- or continuous-time dynamics $F$ on a smooth manifold \(M\) of intrinsic dimension \(m\), there exists a corresponding dynamical system $G$ on a latent Euclidean space \(\mathbb{R}^m\), together with an encoder--decoder pair that intertwines (or \emph{interlaces}) the two dynamics. That is, the encoder maps trajectories on \(M\) corresponding to a dynamics $F$ into latent trajectories obeying a well-defined Euclidean dynamics $G$, while the decoder reconstructs the original motion on \(M\). 
\begin{equation}\label{eq:intro_diagram}
\begin{tikzcd}
	\R^m 
	\arrow[dashed, "G"]{r} 
	\arrow["D"', transform canvas={xshift=-2pt}]{d} 
	& 
	\R^m 
	\arrow["D"', transform canvas={xshift=-2pt}]{d} 
	\\
	M 
	\arrow["F"]{r} 
	\arrow["E"', transform canvas={xshift=2pt}]{u} 
	& 
	M 
	\arrow["E"', transform canvas={xshift=2pt}]{u}
\end{tikzcd}
\end{equation}
Such a correspondence formalizes the intuition that autoencoders can learn intrinsic state variables whose evolution mirrors that of the true system. However, we also show that global constructions are obstructed by topological constraints---no single coordinate chart can smoothly parametrize the entire manifold. Consequently, the interlacing property can be established only on large subsets of \(M\), excluding regions of small measure where such global coordinates fail to exist. These results provide a rigorous mathematical underpinning for the representational assumptions implicit in manifold-learning and latent-dynamics models, clarifying both their power and their limitations.

This work builds directly on our previous paper, \emph{``Why should autoencoders work?''}~\cite{kvalheim2024why}, which provided a mathematical framework for understanding when autoencoders can successfully represent data lying on a low-dimensional manifold embedded in a high-dimensional space. That earlier study focused on static settings, analyzing conditions for the existence of encoder--decoder pairs that are approximately invertible and faithful to the manifold's topology and geometry. The present paper extends those ideas in two complementary directions. First, we incorporate \emph{dynamics}, asking when one can construct an autoencoder that intertwines with the evolution of a dynamical system on a manifold---thus providing latent coordinates whose dynamics mirror those of the original system. Second, we return to the \emph{static case}, but with a broader goal: to investigate, from a topological and geometric standpoint, the intrinsic capabilities and limitations of autoencoders even in the absence of dynamics, with a particular focus on what restrictions there are on the dimension of latent spaces. Together, these two lines of inquiry reveal both the promise and the fundamental obstructions inherent in learning manifold representations, clarifying the precise sense in which autoencoders can or cannot discover globally consistent latent spaces.

\section{Limitations and capabilities}\label{sec:illustrate-results}

In this paper, a smooth ($C^\infty$) \concept{manifold} $M$ has a possibly empty boundary $\partial M$. 
A manifold $M$ is \concept{closed} if $M$ is compact and $\partial M = \varnothing$.
Throughout, let $n\in \N$ and $M$ be a smoothly embedded submanifold of $\R^n$ of dimension $m$, and assume for simplicity that $M$ is connected.

We begin by formalizing an observation of Batson et al. \cite{Batson2021}.
\begin{Prop}\label{prop:easy}
The following two statements are equivalent:
\begin{itemize}
    \item 
There exist continuous (resp. smooth) maps $E\colon \R^n\to \R^\ell$, $D\colon \R^\ell\to \R^n$ satisfying $\DE|_M=\id_M$.
\item 
$M$ admits a topological (resp. smooth) embedding into $\R^\ell$.
\end{itemize}	
\end{Prop}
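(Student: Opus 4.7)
The plan is to prove the equivalence by handling the two implications separately. The forward direction is essentially definitional---the desired embedding will just be $E|_M$ itself---while the backward direction requires constructing $E$ and $D$ by extending a given embedding $\varphi\colon M\hookrightarrow \R^\ell$ and its inverse $\varphi^{-1}$ to the ambient Euclidean spaces. The substantive work lies in this extension step.

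For the forward direction, I would observe that $E|_M$ is continuous as a restriction, and injective because $D$ left-inverts it on $M$; its inverse on its image $E(M)$ is the continuous map $D|_{E(M)}$, so $E|_M$ is a topological embedding. In the smooth case, I would apply the chain rule to $\DE|_M=\id_M$ at each $x\in M$ to obtain $dD_{E(x)}\circ d(E|_M)_x = \id_{T_xM}$, so $d(E|_M)_x$ has a left inverse and is therefore injective. Hence $E|_M$ is a smooth immersion that is also a topological embedding, i.e., a smooth embedding.

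For the backward direction, given an embedding $\varphi\colon M\hookrightarrow \R^\ell$, I would extend $\varphi$ to a continuous (resp. smooth) map $E\colon\R^n\to\R^\ell$ and extend the inverse $\varphi^{-1}\colon \varphi(M)\to M\subset\R^n$ to a continuous (resp. smooth) map $D\colon\R^\ell\to\R^n$. Granting the extensions, for any $x\in M$ we have $D(E(x))=D(\varphi(x))=\varphi^{-1}(\varphi(x))=x$, giving $\DE|_M=\id_M$. The hard part is producing the extensions. In the continuous case, I would invoke the Tietze extension theorem coordinate-wise, using the metrizability (hence normality) of $\R^n$ and $\R^\ell$; this requires $M$ to be closed in $\R^n$ and $\varphi(M)$ to be closed in $\R^\ell$. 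In the smooth case, I would use a smooth tubular neighborhood $U$ of $M$ in $\R^n$ with retraction $\pi\colon U\to M$, together with a smooth bump function $\rho$ equal to $1$ on $M$ and compactly supported in $U$, setting $E=\rho\cdot(\varphi\circ\pi)$ (extended by $0$ outside the support of $\rho$), and symmetrically for $D$ using a tubular neighborhood of $\varphi(M)$ in $\R^\ell$. The key technical assumption is that $M$ and $\varphi(M)$ are closed subsets of the respective ambient spaces---this is automatic when $M$ is compact, and can otherwise be arranged by choosing $\varphi$ to be a proper embedding (and, if necessary, handling the closedness of $M$ in $\R^n$ separately via the local closedness of any embedded submanifold).
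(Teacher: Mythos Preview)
Your argument follows the same route as the paper: in the forward direction $E|_M$ itself is the embedding (with the chain rule supplying injectivity of the differential in the smooth case), and in the backward direction one extends a given embedding and its inverse to obtain $E$ and $D$. The paper simply asserts that such extensions exist, whereas you go further and name the mechanisms (Tietze, tubular neighborhoods with bump functions) and correctly flag the closedness hypotheses these tools require.
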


\begin{proof}
		If continuous  maps $E\colon \R^n\to \R^\ell$, $D\colon \R^\ell \to \R^n$ satisfying $\DE|_M = \id_M$ exist, then $E|_M\colon M\to E(M)$ is a continuous bijection with continuous inverse $D|_{E(M)}\colon E(M)\to M$, so $E|_M\colon M\to \R^\ell$ is a topological embedding.
		If moreover  $E$ and $D$ are smooth, then differentiating the expression $\DE|_M=\id_M$ yields that $E|_M$ is an immersion and hence also a smooth embedding.
		
		Conversely, if $M$ admits a topological (resp. smooth) embedding $\tilde{E}\colon M\to \R^\ell$, then any continuous (resp. smooth) extensions $E\colon \R^n\to \R^\ell$ of $\tilde{E}$ and $D\colon \R^\ell\to \R^n$ of $E|_M^{-1}\colon E(M) \to M \subset \R^n$ satisfy $\DE|_M=\id_M$.	
\end{proof}

In typical applied usage of autoencoders, $M$ does not admit an embedding into $\R^\ell$, so ``ideal'' autoencoders producing lossless encoding/decoding do not generally exist. 
And yet, autoencoders are empirically useful. 
This begs the question of whether ``approximately ideal'' autoencoders still exist.

\subsection{Limitations}\label{subsec:limitations}

Our first theorem gives a particularly strong ``no'' answer when the latent space dimension $\ell$ is strictly less than the data manifold dimension $m$. 
\begin{Th}\label{th:no-dim-decrease}
	Assume that  $m > \ell$.
	 Then for any relatively open subset $U\subset M$, there is $C > 0$ such that, for any continuous $E\colon \R^n\to \R^\ell$ and $D\colon \R^\ell\to \R^n$, 
	\begin{equation*}
	\sup_{x\in U}	\|\DE(x)-x\|\geq C.
	\end{equation*}
\end{Th}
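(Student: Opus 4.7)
The plan is proof by contradiction via a local degree obstruction. Suppose the conclusion fails for some relatively open $U \subset M$, so that for every $\varepsilon > 0$ there exist continuous $E, D$ with $\sup_{x \in U}\|D(E(x))-x\| < \varepsilon$. I would fix a smooth tubular neighborhood $N \subset \R^n$ of $M$ with smooth retraction $\pi : N \to M$, pick $p \in U$, and pick a closed coordinate $m$-disk $K \subset U$ (the image under a chart of a small closed Euclidean ball) with $p \in \interiorM{K}$. The goal is to detect a single cohomology class at $p$ on which $D \circ E$ must act both nontrivially (by a homotopy-to-inclusion argument) and trivially (by factoring through a low-dimensional compactum).

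For $\varepsilon$ sufficiently small in terms of the tube radius and the chart around $K$, the composition $\phi := D \circ E$ sends $K$ into $N$, so the retracted map $\tilde\phi := \pi \circ \phi : K \to M$ is defined. Further shrinking $\varepsilon$ ensures $\tilde\phi(\partial K) \subset M \setminus \{p\}$ and that the straight-line homotopy $H(x,t) := \pi((1-t)x + t\phi(x))$ stays in $M \setminus \{p\}$ for all $x \in \partial K$ and $t \in [0,1]$. This is a homotopy of pairs from the inclusion $\iota : (K,\partial K) \hookrightarrow (M, M \setminus \{p\})$ to $\tilde\phi$, so $\tilde\phi^{*} = \iota^{*}$ on $\check H^{m}(\,\cdot\,;\Z)$. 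But $\iota_{*} : H_{m}(K,\partial K) \to H_{m}(M, M\setminus\{p\})$ is the local-orientation isomorphism $\Z \to \Z$, so the cohomology dual $\iota^{*}$, and hence $\tilde\phi^{*}$, is nonzero. (Singular and \v{C}ech cohomology agree on the endpoint pairs, which are manifold pairs.)

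Now set $X := E(K) \subset \R^{\ell}$ and $A := E(\partial K)$. By construction $\pi \circ D$ restricts to a continuous map of pairs $(X, A) \to (M, M \setminus \{p\})$, and $\tilde\phi$ factors as $(K, \partial K) \xrightarrow{E|_K} (X, A) \xrightarrow{\pi \circ D|_X} (M, M \setminus \{p\})$. Consequently $\tilde\phi^{*}$ on \v{C}ech cohomology factors through $\check H^{m}(X, A; \Z)$. Since $X$ is a compact metric subspace of $\R^{\ell}$, its covering dimension is at most $\ell < m$; the classical theorem equating covering dimension with \v{C}ech-cohomological dimension for compact metric spaces (Alexandrov--Hurewicz--Wallman) gives $\check H^{q}(X, A; \Z) = 0$ for every $q > \ell$ and every closed $A \subset X$. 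Thus $\tilde\phi^{*} = 0$, contradicting the previous paragraph.

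The main technical obstacle is coordinating the smallness of $\varepsilon$: it must fit within the tube $N$, keep $\tilde\phi(\partial K)$ away from $p$, and keep the retracted straight-line homotopy away from $p$ on $\partial K$. Each is a routine tubular-neighborhood estimate, but all must hold simultaneously and be expressed in terms of a single constant $C$ depending only on $M$, $N$, and $K \subset U$. Once this is set, the only nontrivial input is the dimension-theoretic vanishing of $\check H^{m}(X, A; \Z)$, which is precisely what encodes the hypothesis $\ell < m$ topologically; the argument is in the same spirit as the classical proof that $m$-manifolds do not embed into $\R^{\ell}$ when $m > \ell$.
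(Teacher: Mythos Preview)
Your argument is correct but takes a genuinely different route from the paper. The paper's proof is a three-line application of Borsuk--Ulam: since $m>\ell$, one can smoothly embed $\varphi\colon \sph^\ell\hookrightarrow U$, set $2C=\min_{y}\|\varphi(y)-\varphi(-y)\|$, and observe that Borsuk--Ulam forces $E\circ\varphi$ to identify some antipodal pair $\{z,-z\}$; then $D\circ E\circ\varphi(z)=D\circ E\circ\varphi(-z)$ together with the triangle inequality gives the bound. Your approach instead runs a local-degree argument through \v{C}ech cohomology and invokes the Alexandrov identification of covering dimension with cohomological dimension to kill $\check H^{m}$ of the image pair $(E(K),E(\partial K))\subset\R^{\ell}$. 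Both are valid; the paper's is considerably shorter, uses only a textbook theorem, and produces an explicit constant $C$ depending only on a chosen $\sph^{\ell}$-embedding. Your route is heavier in machinery (tubular neighborhoods, \v{C}ech functoriality, dimension theory) but is conceptually closer to the classical non-embedding proof you allude to, and it makes transparent exactly where the hypothesis $\ell<m$ enters as a cohomological vanishing statement. The one place worth tightening in a final write-up is the simultaneous smallness of $\varepsilon$ you flag at the end: it is routine, but you should state explicitly that $C$ can be taken as any positive number below the minimum of the tube radius over $K$ and a Lipschitz-adjusted distance from $\partial K$ to $p$ in $M$.
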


\begin{proof}
Fix any smooth embedding $\varphi\colon \sph^\ell \hookrightarrow U$. (Note that such an embedding exists because of the strict inequality $m > \ell$.)
By continuity and compactness of $\sph^\ell$, there is $C > 0$ such that
\begin{equation}\label{eq:min-antip-dist}
	\min_{y\in \sph^\ell}\|\varphi(y)-\varphi(-y)\| = 2C.
\end{equation}
The Borsuk-Ulam theorem furnishes a point $z\in \sph^\ell$ such that $E\circ \varphi(z)=E\circ \varphi(-z)$ \cite[Cor.~2B.7]{hatcher2002algebraic}.
This, \eqref{eq:min-antip-dist}, and the triangle inequality imply that
\begin{align*}
	\|\DE(\varphi(z))-\varphi(z)\| + \|\DE(\varphi(-z))-\varphi(-z)\|  &\geq 2C,
\end{align*}
so at least one of the terms on the left side of the latter inequality is $\geq C$, completing the proof.
\end{proof}

For the case that $m = \ell$, we have the following result (in which $C$ can always be taken $\geq$ the \emph{reach} of $M$ \cite[p.~ 9]{kvalheim2024why}). 

\begin{Th}[{\cite[Thm~2]{kvalheim2024why}}]\label{th:reach-intro}
Assume that $m = \ell$ and $M$ is a closed manifold. 
There is $C > 0$ such that, for any pair of continuous maps $E\colon \R^n\to \R^\ell$, $D\colon \R^\ell\to \R^n$, 
	\begin{equation*}
	\max_{x\in M}\|\DE(x) - x \|\geq C.
\end{equation*}
\end{Th}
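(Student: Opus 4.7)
The plan is to argue by contradiction, using a quantitative version of the fact that a closed connected $m$-manifold cannot embed into $\R^m$. Fix any $r > 0$ for which the open tube $T_r(M) := \{y \in \R^n : \dist{y}{M} < r\}$ admits a continuous nearest-point retraction $\pi \colon T_r(M) \to M$; such an $r$ exists by the tubular neighborhood theorem, since $M$ is a closed smoothly embedded submanifold of $\R^n$. I will show $C := r$ works.

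Suppose to the contrary that some continuous $E, D$ satisfy $\sup_{x \in M} \|\DE(x) - x\| < r$. Then $\DE(M) \subset T_r(M)$, so $V := D^{-1}(T_r(M))$ is a nonempty open subset of $\R^\ell = \R^m$ containing $E(M)$. Set $h := E|_M \colon M \to V$ and $g := \pi \circ D|_V \colon V \to M$, so that $g \circ h = \pi \circ \DE|_M$. The straight-line homotopy $H_t(x) := \pi\bigl(tx + (1-t)\DE(x)\bigr)$ connects $g \circ h$ at $t=0$ to $\id_M$ at $t=1$ (using $\pi|_M = \id_M$); it is well-defined because $tx + (1-t)\DE(x)$ has distance at most $(1-t)\|\DE(x) - x\| < r$ from $M$, so lies in $T_r(M)$. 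Thus $\id_M$ factors up to homotopy through the open subset $V$ of $\R^m$.

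Apply singular homology with $\Z/2$ coefficients, so as to avoid an orientability hypothesis on $M$. The induced composition $\id_* = g_* \circ h_* \colon H_m(M;\Z/2) \to H_m(V;\Z/2) \to H_m(M;\Z/2)$ must be the identity. The outer groups equal $\Z/2$, generated by the mod-$2$ fundamental class of the closed connected $m$-manifold $M$. The middle group vanishes: each connected component of $V$ is an $m$-manifold without boundary, and cannot be compact (else it would be clopen and nonempty in the connected noncompact $\R^m$, forcing it to equal $\R^m$, which is absurd), and a noncompact connected $m$-manifold has trivial top homology. Hence the identity on $\Z/2$ would factor through zero, the required contradiction.

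The main obstacle is verifying the vanishing of $H_m(V;\Z/2)$: it combines the observation that every component of an open subset of $\R^m$ is noncompact with the standard fact that the top homology of a noncompact connected $m$-manifold vanishes. Using $\Z/2$ coefficients is essential because $M$ need not be orientable; everything else (positivity of the reach, continuity of $\pi$, and the straight-line homotopy staying in $T_r(M)$) is routine once the reach is available.
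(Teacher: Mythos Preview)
Your argument is correct. Note, however, that the paper does not actually supply its own proof of this statement: Theorem~\ref{th:reach-intro} is quoted verbatim from \cite[Thm~2]{kvalheim2024why}, and the only hint given here is the parenthetical remark that $C$ may be taken at least as large as the reach of $M$. Your proof matches that hint precisely: your $r$ is any radius of a tubular neighborhood with a continuous nearest-point retraction, i.e., any value below the reach, and the contradiction via the $\Z/2$ fundamental class is the standard way to exclude a homotopy retract of a closed $m$-manifold through an open subset of $\R^m$. So your approach is essentially the one the paper (and its reference) has in mind.

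One small remark: your final paragraph calls the vanishing of $H_m(V;\Z/2)$ ``the main obstacle,'' but in fact you dispatch it cleanly. The only implicit assumption worth flagging is $m\geq 1$, needed so that $\R^m$ is noncompact; for $m=0$ the theorem is false (a point autoencodes perfectly through $\R^0$), but this degenerate case is clearly not intended.
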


Observe that, in contrast to the case $m>\ell$, Theorem~\ref{th:reach-intro} does not rule out the existence of (relatively) open subsets $U$ for which a perfect round-trip ($\DE(x) = x$ for all $x\in U$) is possible. Indeed, locally this can trivially be accomplished on Euclidean charts. In fact, we next describe how such perfect round-trips can be done not merely locally but also ``almost'' globally.

\subsection{Capabilities of static autoencoders}\label{subsec:capabilities_static}
In this subsection and the next one we exclusively consider the case that the dimensions $m$ of $M$ and $\ell$ of $\R^\ell$ are equal.

While Theorem~\ref{th:reach-intro} shows that there can be global obstructions to autoencoding when $m=\ell$, there are no local obstructions.
In fact, there are no local obstructions in a strong sense: ideal autoencoding is always possible on arbitrarily ``big'' open subsets of $M$ with nice properties.
We defer to section~\ref{sec:proofs} the proof of the following theorem, which is a stronger version of \cite[Thm~1]{kvalheim2024why}.
We say that a subset of a smooth manifold $M$ is \concept{full measure} if it is the complement of a set of ``measure zero'', which is well-defined for any such $M$ \cite[p.~128]{lee2013smooth}.

\begin{restatable}[]{Th}{ThmPos}\label{th:positive}
Assume that $m = \ell$ and $M$ is compact.
For any finite set $S\subset M$ there are relatively open subsets
\optionaldisplay{U_1 \subset \cl{U_1}\subset U_2 \subset \cl{U_2}\subset \cdots}
of $M$ such that:
\begin{itemize}
	\item  $\bigcup_{i\in \N} U_i$ is full measure in $M$ and $\bigcup_{i\in \N} U_i \cap \partial M$ is full measure in $\partial M$,
	\item  $U_i$ and $\cl{U_i}$ are contractible sets containing $S$ for each $i \in \N$, and
	\item there are smooth maps $E_i\colon \R^n\to \R^\ell$, $D_i\colon \R^\ell\to \R^n$ satisfying $\DiEi|_{U_i} = \id_{U_i}$ for each $i$.	
\end{itemize}
Moreover, if $\partial M \!\!=\!\! \varnothing$, then $\cl{U_i}$ is a smooth manifold diffeomorphic to a closed $m$-disc for each $i$.
\end{restatable}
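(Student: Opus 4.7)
The plan is to first construct a single relatively open subset $U\subset M$ with $S\subset U$, whose complement has measure zero in $M$ (and whose intersection with $\partial M$ has measure-zero complement in $\partial M$), and which admits an exhaustion by compact contractible subsets $\cl{U_i}$ with $U_i\subset \cl{U_i}\subset U_{i+1}$, $\bigcup_i U_i=U$, and such that each $\cl{U_i}$ is a compact smooth submanifold (with boundary/corners) of $M\subset\R^n$ admitting a smooth embedding into $\R^\ell$. Granting this, the theorem is essentially a quantitative smooth form of the constructive direction of Proposition~\ref{prop:easy}: starting from a smooth embedding $\varphi_i\colon \cl{U_i}\hookrightarrow \R^\ell$, I would extend $\varphi_i$ to a smooth $E_i\colon \R^n\to \R^\ell$ using Whitney's extension theorem (applied via a tubular neighborhood of $\cl{U_i}\subset \R^n$ and a smooth cutoff), and symmetrically extend $\varphi_i^{-1}\colon \varphi_i(\cl{U_i})\to \R^n$ to a smooth $D_i\colon \R^\ell \to \R^n$. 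Then $\DiEi|_{\cl{U_i}}=\id$, so in particular $\DiEi|_{U_i}=\id_{U_i}$.

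For the construction of $U$, I would appeal to Morse theory. In the closed case $\partial M=\varnothing$, pick a Morse function $f\colon M\to \R$ with a unique local minimum $p_0$ and a Riemannian metric whose $-\nabla f$-gradient flow has, as the basin of attraction of $p_0$, an open subset $U$ diffeomorphic to $\R^m$; its complement is the finite union of descending manifolds of the remaining critical points, each of codimension $\geq 1$, hence of measure zero. A preliminary perturbation of $f$ (or of the metric) ensures that $S\subset U$, as the bad set is measure zero while $S$ is finite. Identifying $U\cong \R^m$ by a diffeomorphism $\psi$, setting $U_i:=\psi^{-1}(B_i(0))$ makes each $\cl{U_i}$ diffeomorphic to a closed $m$-disc, yielding the final clause of the theorem. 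In the case $\partial M \neq \varnothing$, I would double $M$ along $\partial M$ to obtain a closed manifold $DM$ with an involution $\sigma$ fixing $\partial M$, choose a $\sigma$-invariant Morse function on $DM$ whose unique local minimum lies on $\partial M$, and intersect the associated descending cell with $M$. Both $M$ and $\partial M$ then miss only unions of lower-dimensional descending cells (the non-minimum critical points contributing to $M$, the boundary critical points contributing to $\partial M$), so the full-measure condition holds in $M$ and in $\partial M$ simultaneously, and the resulting $U$ is contractible (hence each $\cl{U_i}$ in the exhaustion is contractible).

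The main obstacle is the construction of $U$ in the boundary case: producing a $\sigma$-invariant Morse function on $DM$ with a unique local minimum \emph{lying on $\partial M$} and with all critical points either fixed by $\sigma$ or appearing in $\sigma$-paired orbits, together with verifying that the intersection of its descending cell with $M$ simultaneously has measure-zero complement in $M$ and in $\partial M$, demands some care in equivariant Morse theory. By contrast, inserting the finite set $S$ into the basin of the minimum is a routine perturbation argument, and the subsequent extension of smooth embeddings to the maps $E_i$ and $D_i$ is then a standard application of Whitney extension, tubular neighborhoods, and cutoffs on the compact sets $\cl{U_i}$, possibly after shrinking each $U_i$ slightly to leave room for the cutoff.
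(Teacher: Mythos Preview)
Your proposal is essentially correct and, in the boundaryless case, is the same argument as the paper's: a polar Morse function gives a unique attracting sink whose basin $U\cong\R^m$ has measure-zero complement (the union of the other stable manifolds), and an exhaustion of $U$ by closed discs provides the $\cl{U_i}$; extension of the embedding and of its inverse to ambient smooth maps then yields $E_i,D_i$.

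The boundary case is where your route diverges. The paper does \emph{not} double $M$; instead it works directly with a Morse function on $M$ whose negative gradient $F$ points inward along $\partial M$, so that only the forward semiflow is defined on $M$. To run time \emph{backward} (needed to expand a small disc $V$ around the sink into the exhausting sets), the paper extends $F$ to a compactly supported vector field $G$ on the ambient $\R^n$, takes its genuine flow $\Psi$, and defines $U_i\coloneqq M\cap\interiortM{\Psi^{-i}(V)}$. Contractibility of $U_i$ and $\cl{U_i}$ then follows from the forward-time homotopy $H(t,x)=\tilde\Phi^{t/(1-t)}(x)$, which stays inside $M$ because $F$ is inward-pointing. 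This sidesteps equivariant Morse theory entirely; your doubling approach should also work, but the paper's device is shorter and avoids the care you flag concerning a $\sigma$-invariant Morse function with a single minimum on the fixed set.

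One point to tighten in your boundary sketch: you conclude only that the $\cl{U_i}$ are contractible, and then invoke a smooth embedding $\varphi_i\colon\cl{U_i}\hookrightarrow\R^\ell$ to feed into the extension step. Contractibility of a compact $m$-manifold with corners does not by itself furnish a smooth embedding into $\R^m$. In the paper this issue never arises because a single smooth embedding $E_0\colon M\setminus C\to\R^\ell$ is produced once (from the Morse handle structure) and then restricted to each $\cl{U_i}$; you should indicate the analogous global chart in your half-cell picture rather than appeal to contractibility alone.
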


In a standard way, $M \subset \R^n$ inherits a finite Borel measure generalizing length and surface area to higher dimensions, which we call the \concept{intrinsic measure} \cite[p.~23]{kvalheim2024why}.
Consider any finite Borel measure $\mu$ that is absolutely continuous with respect to the intrinsic measure on $M$. It follows from Theorem~\ref{th:positive} that the measure of the complement $M\setminus U_i$ approaches zero as $i \rightarrow +\infty$. This is a consequence of the following elementary measure-theoretic fact, applied to the complements $F_i:=M\setminus U_i$: if $F_i$ is a decreasing sequence of measurable sets of finite measure, and denoting $F:=\lim_{i\rightarrow\infty} F_i$, then $\lim_{i\rightarrow\infty} \mu(F_i) = \mu(F)$. In our case, $F$ is the complement of $\bigcup_{i\in \N} U_i$, hence $\mu(F)=0$. 
A similar result holds for the measure of $\partial M\setminus (U_i \cap \partial M)$. 

The following corollary concerning $L^p$-norms was suggested to us by Dr. Joshua Batson  and follows readily from the preceding theorem (see \cite[Rem.~8]{kvalheim2024why} for a proof for $p=2$; the same proof works for general $p$).
It implies in particular that, under mild assumptions, the standard $L^2$ training error used to compute autoencoders can always be made arbitrarily small when $m=\ell$.

\begin{Co}\label{co:lp-intro}
Assume that $m = \ell$ and $M$ is compact.
Let $\mu$ be any finite Borel measure that is absolutely continuous with respect to the intrinsic measure on $M$.
For any $\varepsilon, p > 0$, there are smooth maps $E\colon \R^n\to \R^\ell$, $D\colon \R^\ell\to \R^n$  such that
\begin{align*}
	\int_M \|D(E(x))-x\|^p\, d\mu(x) &< \varepsilon,\\
 	\int_{\partial M} \|D(E(x))-x\|^p\, d\partial \mu(x) &< \varepsilon.
\end{align*}
\end{Co}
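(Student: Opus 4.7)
The plan is to apply Theorem~\ref{th:positive} (with $S$ any fixed finite subset of $M$) to get the sequence of pairs $(E_i, D_i)$ and the nested sets $U_i$, and then show that for sufficiently large $i$ a bounded modification of $(E_i, D_i)$ makes both integrals small. Since $\DiEi|_{U_i} = \id_{U_i}$, the integrand $\|D_i(E_i(x))-x\|^p$ vanishes on $U_i$, so the integral over $M$ collapses to an integral over $M\setminus U_i$, and the boundary integral to one over $\partial M \setminus (U_i\cap \partial M)$.

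Applying the measure-theoretic fact quoted in the paragraph preceding the corollary to the decreasing sequences $F_i:=M\setminus U_i$ and $F_i':=\partial M\setminus(U_i\cap\partial M)$ (whose intersections have $\mu$- and $\partial\mu$-measure zero by Theorem~\ref{th:positive}), we get $\mu(F_i)\to 0$ and $\partial\mu(F_i')\to 0$. It would therefore suffice to dominate $\|D_i(E_i(x))-x\|$ on $M$ by a constant that does not depend on $i$.

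The main obstacle is that the maximum of $\|D_i(E_i(x))-x\|$ over $M$ is a priori only known to be \emph{some} finite number $K_i$, which could in principle grow in $i$ faster than $\mu(F_i)$ decays, so that $K_i^p\,\mu(F_i)$ need not tend to zero. To bypass this I would fix a closed ball $B\subset\R^n$ containing $M$ together with a smooth ``radial squash'' map $\pi\colon\R^n\to\R^n$ which equals the identity on $B$ and whose image lies in a slightly larger closed ball $B'$ (obtained from a smooth radial cutoff). Replacing $D_i$ by $\tilde{D}_i:=\pi\circ D_i$ preserves the identity relation $\tilde{D}_i\circ E_i|_{U_i}=\id_{U_i}$, since $U_i\subset M\subset B$ is the region where $\pi$ acts trivially; and now the uniform bound $\|\tilde{D}_i(E_i(x))-x\|\leq \textnormal{diam}(B')$ holds for every $i$ and every $x\in M$.

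Combining these ingredients gives $\int_M\|\tilde{D}_i(E_i(x))-x\|^p\,d\mu(x)\leq \textnormal{diam}(B')^p\,\mu(F_i)$, and analogously $\int_{\partial M}\|\tilde{D}_i(E_i(x))-x\|^p\,d\partial\mu(x)\leq \textnormal{diam}(B')^p\,\partial\mu(F_i')$. Choosing $i$ large enough that both right-hand sides lie below $\varepsilon$ and setting $E:=E_i$, $D:=\tilde{D}_i$ finishes the argument; the same proof works verbatim for every $p>0$.
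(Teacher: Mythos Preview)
Your argument is correct and follows the same route the paper indicates: apply Theorem~\ref{th:positive}, note that the integrand vanishes on $U_i$, and use the measure-continuity fact from the paragraph preceding the corollary to make $\mu(M\setminus U_i)$ and $\partial\mu(\partial M\setminus(U_i\cap\partial M))$ arbitrarily small. The radial-squash modification $\tilde D_i=\pi\circ D_i$ is a clean way to secure the $i$-independent pointwise bound; the paper itself does not spell out this step but defers to \cite[Rem.~8]{kvalheim2024why}, so your explicit handling of the uniform bound is a welcome detail rather than a departure.
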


\subsection{Capabilities of dynamic autoencoders}\label{subsec:capabilities_dynamic}

In this section, we assume given a dynamics (discrete or continuous time) on the data manifold $M$. Without loss of generality, when $M$ is topologically closed, we may equally well (extend if necessary) start from a dynamics on the ambient space $\R^n$ for which the submanifold $M$ is invariant. 

We begin with simple corollaries of Theorem~\ref{th:positive} for autoencoding dynamical systems for (possibly) \emph{small} time.
The first corollary is for continuous-time systems and the second is for discrete-time.
For the first statement, let $C^{0,1}(\R^\ell,\R^\ell)\subset C(\R^\ell,\R^\ell)$ denote the subspace of locally Lipschitz vector fields (with subspace topology induced by the compact-open topology). 

\begin{Co}\label{co:vf-autoencoding-small-time}
	Assume that $m = \ell$ and $M$ is compact.
	Let $U_1,U_2,\ldots$ be as in Theorem~\ref{th:positive}.
        Then for each $i\in \N$ there  are smooth maps
\optionaldisplay{E\colon \R^n\to \R^\ell, \; D\colon \R^\ell\to \R^n}
satisfying $\DE|_{U_i} = \id_{U_i}$ and the following property: if $f$ is a locally Lipschitz vector field on $\R^n$ with local flow $\Phi_f$ satisfying
    $f(p)\in T_p M$ for all $p\in M$,  there is a complete locally Lipschitz vector field $g$ on $\R^\ell$ with flow $\Phi_g$ satisfying
	\begin{equation}\label{eq:vf-autoencoding-small-time-1}
		D\circ \Phi_g^t\circ E(x) = \Phi_f^t(x)
	\end{equation}
	for all $x\in U_i$ and $t$ in the connected component $I_x\subset \R$ containing $0$ of the set $\{s\in \R\colon \Phi^s(x)\in \cl{U_i}\}$.
    Moreover, $g$ is $C^k$ if $f$ is $C^k$ with $k\in \N_{\geq 1}\cup \{\infty\}$ .
	
	In particular, suppose such an $f$ is given.
    Then for any subsets 
\optionaldisplay{\E\subset C(\R^n,\R^\ell), \;\D\subset C(\R^\ell,\R^n),\;\V \subset C^{0,1}(\R^\ell, \R^\ell)\subset C(\R^\ell,\R^\ell)}
 that are dense in the compact-open topologies and any $\varepsilon, T > 0$, there are $\tilde{E}\in \E$, $\tilde{D}\in \D$, $\tilde{g}\in \V$ such that
	\begin{equation}\label{eq:vf-autoencoding-small-time-2}
	\|\tilde{D}\circ \Phi_{\tilde{g}}^t \circ \tilde{E}(x) - \Phi_f^t(x)\|< \varepsilon
	\end{equation} 
	for all $x\in U_i$ and $t\in I_x \cap [-T,T]$.
\end{Co}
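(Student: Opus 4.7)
The plan is to construct $g$ as the $E$-pushforward of $f$, cut off outside a neighborhood of $E(\cl{U_i})$ to be globally defined and complete, verify the intertwining via uniqueness of Lipschitz ODE solutions, and obtain the density statement from continuous dependence of flows on their defining data.

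Fix $i\in\N$ and let $E, D$ be the smooth maps provided by Theorem~\ref{th:positive} for $U_i$. Define
\[
\hat g(y) := dE_{D(y)}\bigl(f(D(y))\bigr), \qquad y\in\R^\ell,
\]
which is well-defined globally because $D(y)\in\R^n$, $f$ is defined on $\R^n$, and $dE_{D(y)}\colon\R^n\to\R^\ell$ is linear; moreover $\hat g$ inherits the regularity of $f$. The identity $\DE|_{U_i}=\id_{U_i}$ extends by continuity to $\cl{U_i}$, so $D(E(p))=p$ and hence $\hat g(E(p))=dE_p(f(p))$ for all $p\in\cl{U_i}$. Pick a smooth bump function $\beta\colon\R^\ell\to[0,1]$ of compact support, equal to $1$ on an open neighborhood $V$ of the compact set $E(\cl{U_i})$, and set $g:=\beta\hat g$: this $g$ is complete, locally Lipschitz (and $C^k$ whenever $f$ is), and agrees with $\hat g$ on $V$. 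For any $x\in U_i$ the curve $\gamma(t):=E(\Phi_f^t(x))$ satisfies $\gamma(t)\in V$ and, using tangency $f(\Phi_f^t(x))\in T_{\Phi_f^t(x)}M$, $\gamma'(t)=dE_{\Phi_f^t(x)}(f(\Phi_f^t(x)))=\hat g(\gamma(t))=g(\gamma(t))$ for all $t\in I_x$; by uniqueness of Lipschitz ODE solutions, $\Phi_g^t(E(x))=E(\Phi_f^t(x))$ on $I_x$, and applying $D$ yields \eqref{eq:vf-autoencoding-small-time-1}.

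For the density part I will invoke the classical fact that flows of locally Lipschitz vector fields depend continuously on the field (in compact-open topology on a neighborhood of the trajectory) and on the initial condition, uniformly on compact space-time subsets of the domain of existence. Since $\{\Phi_g^t(E(x)):x\in U_i,\, t\in I_x\cap[-T,T]\}\subset E(\cl{U_i})$ is compact and contained in $V$, pick an open precompact neighborhood $W$ of $E(\cl{U_i})$ with $\overline W\subset V$. For given $\varepsilon>0$, continuous dependence yields a compact-open neighborhood $\mathcal N$ of $g|_{\overline W}$ and a $\rho>0$ such that any locally Lipschitz $\tilde g$ with $\tilde g|_{\overline W}\in\mathcal N$, starting from any point within $\rho$ of $E(U_i)$, produces a flow remaining in $\overline W$ on $[-T,T]$ and tracking $\Phi_g^t(E(\cdot))$ to within $\varepsilon/3$. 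Density of $\V$, $\E$, $\D$ supplies $\tilde g\in\V\cap\mathcal N$, $\tilde E\in\E$ with $\sup_{x\in U_i}\|\tilde E(x)-E(x)\|<\rho$, and $\tilde D\in\D$ with $\sup_{w\in\overline W}\|\tilde D(w)-D(w)\|<\varepsilon/3$; these combine with the identity $D\circ\Phi_g^t\circ E(x)=\Phi_f^t(x)$ and the triangle inequality to give \eqref{eq:vf-autoencoding-small-time-2}. The main obstacle is arranging uniformity in $x\in U_i$ despite the $x$-dependence of $I_x$, and keeping the perturbed flow inside $V$ (where $g$ equals $\hat g$); both are absorbed by working on the compact set $E(\cl{U_i})\times[-T,T]$ and choosing $W$, $V$, $\rho$, $\mathcal N$ with sufficient margin.
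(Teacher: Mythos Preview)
Your proof is correct and follows essentially the same route as the paper's: both construct $g$ as (a compactly supported extension of) the $E$-pushforward of $f$ on $\cl{U_i}$, verify \eqref{eq:vf-autoencoding-small-time-1} via the chain rule and uniqueness of Lipschitz ODE solutions, and deduce \eqref{eq:vf-autoencoding-small-time-2} from continuous dependence of flows together with continuity of composition in the compact-open topology over the compact set $\bigcup_{x\in\cl{U_i}}(I_x\cap[-T,T])\times\{x\}$.

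The only noteworthy implementation differences are: (i) the paper takes $E,D$ to be the pair $E_{i+1},D_{i+1}$ from Theorem~\ref{th:positive}, so that $D\circ E=\id$ holds on the open set $U_{i+1}\supset\cl{U_i}$ directly, whereas you use the pair for index $i$ and pass to $\cl{U_i}$ by continuity; and (ii) the paper invokes an abstract compactly supported Lipschitz (or $C^k$) extension of the pushforward $(E|_{\cl{U_i}})_*f$ from the compact set $E(\cl{U_i})$, while you write down a concrete global formula $\hat g(y)=dE_{D(y)}\bigl(f(D(y))\bigr)$ and then cut off. Your explicit formula is a mild advantage, since it avoids appealing to an extension theorem and makes the regularity claim ($g$ is $C^k$ when $f$ is) immediate.
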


\begin{proof}
Let $j\coloneqq i+1$ and let $D_{j}$, $E_j$ be as in Theorem~\ref{th:positive}.
Given $f$, let $g$ be any compactly supported and locally Lipschitz (or $C^k$ if $f$ is $C^k$) extension to $\R^\ell$ of the pushforward
\begin{align*}
(E_j|_{\cl{U_i}})_*f 
\coloneqq d E_j \circ f \circ E_j|_{\cl{U_i}}^{-1}. 	
\end{align*}
Then since $E_j|_{\cl{U_i}}^{-1}=D_j|_{E_j(\cl{U_i})}$, the chain rule and Picard-Lindel\"{o}f theorem imply that \eqref{eq:vf-autoencoding-small-time-1} is satisfied by $E\coloneqq E_j$ and $D\coloneqq D_j$  for all $x\in U_i$ and $t\in I_x$.
Finally, \eqref{eq:vf-autoencoding-small-time-2} follows from \eqref{eq:vf-autoencoding-small-time-1} and the facts that $\Phi_f$ is continuous, $(t,y,g)\mapsto \Phi_g^t(y)$ is continuous, 
\optionaldisplay{\bigcup_{y\in \cl{U_i}}(I_y\cap[-T,T])\times \{y\} \subset \R \times \R^n}
is compact for any $T>0$, and composition of continuous functions is continuous with respect to the compact-open topologies \cite[p.~64, Ex.~10(a)]{hirsch1994differential}.
\end{proof}

For discrete-time dynamical systems defined by a map $F\colon \R^n\to \R^n$ satisfying $F(M)\subset M$, there does not seem to be a directly analogous version (without imposing extra assumptions) of the ``small time'' Corollary~\ref{co:vf-autoencoding-small-time},  due to the fact that $F(U_i)\not \subset U_i$ may occur.  
The following statement reflects this.

\begin{Co}\label{co:dt-autoencoding-small-time}
	Assume that $m = \ell$ and $M$ is compact.
	Let $U_1,U_2,\ldots$ be as in Theorem~\ref{th:positive}.
        Then for each $i\in \N$ there  are smooth maps $E\colon \R^n\to \R^\ell$, $D\colon \R^\ell\to \R^n$ satisfying $\DE|_{U_i} = \id_{U_i}$ and the following property: if $F\colon \R^n\to \R^n$ is a $C^k$ map satisfying $F(M)\subset M$ with $k\in \N_{\geq 0}\cup \{\infty\}$, there is a  $C^k$ map $G\colon \R^\ell \to \R^\ell$ such that
	\begin{equation}\label{eq:dt-autoencoding-small-time-1}
		D\circ G \circ E(x) =F(x)
	\end{equation}
	for all $x\in U_i \cap F^{-1}(U_i)$.
	
	In particular, suppose such an $F$ is given.
        Then for any subsets
\optionaldisplay{E\subset C(\R^n,\R^\ell), \;\D\subset C(\R^\ell,\R^n), \;\mathcal{G} \subset C(\R^\ell,\R^\ell)}
that are dense in the compact-open topologies and any $\varepsilon > 0$, there are $\tilde{E}\in \E$, $\tilde{D}\in \D$, $\tilde{G}\in \mathcal{G}$ such that
	\begin{equation}\label{eq:dt-autoencoding-small-time-2}
		\|\tilde{D}\circ \tilde{G} \circ \tilde{E}(x) - F(x)\|< \varepsilon
	\end{equation} 
	for all $x\in U_i\cap  F^{-1}(U_i)$.
\end{Co}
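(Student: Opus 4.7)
The plan is to follow the same template as the proof of Corollary~\ref{co:vf-autoencoding-small-time}, but replacing the pushforward of a vector field with a simple pre- and post-composition. Set $j \coloneqq i+1$ and let $D_j, E_j$ be the maps furnished by Theorem~\ref{th:positive}; take $E \coloneqq E_j$ and $D \coloneqq D_j$. Since $U_i \subset U_j$ and $D_j \circ E_j|_{U_j} = \id_{U_j}$, the required identity $\DE|_{U_i} = \id_{U_i}$ is immediate.

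Given $F$ of class $C^k$ with $F(M)\subset M$, I would define
\[
G \coloneqq E_j \circ F \circ D_j \colon \R^\ell \to \R^\ell,
\]
which is $C^k$ as a composition of $F$ with smooth maps. To verify \eqref{eq:dt-autoencoding-small-time-1}, fix $x \in U_i \cap F^{-1}(U_i)$: since $x \in U_j$, we have $D_j(E_j(x)) = x$, hence $G(E(x)) = E_j(F(x))$; then, using $F(x) \in U_i \subset U_j$, a further application of $D_j$ yields $D(G(E(x))) = F(x)$, as required.

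For the approximation part \eqref{eq:dt-autoencoding-small-time-2}, I would note that $U_i \cap F^{-1}(U_i)$ is contained in the compact set $\cl{U_i}$. With $E, G, D$ from the previous step in hand, density of $\E, \D, \mathcal{G}$ and joint continuity of composition in the compact-open topology (as invoked at the end of the proof of Corollary~\ref{co:vf-autoencoding-small-time} via \cite[p.~64, Ex.~10(a)]{hirsch1994differential}) let me choose $\tilde{E} \in \E$, $\tilde{G} \in \mathcal{G}$, $\tilde{D} \in \D$ uniformly close enough to $E, G, D$ on appropriate compact sets containing $\cl{U_i}$, $E(\cl{U_i})$, and $G(E(\cl{U_i}))$, respectively, to secure the uniform bound \eqref{eq:dt-autoencoding-small-time-2} on $U_i\cap F^{-1}(U_i)$.

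The only real subtlety — and precisely what the authors flag in the paragraph preceding the corollary — is that $F(U_i)$ need not be contained in $U_i$, so one must restrict to the preimage $U_i \cap F^{-1}(U_i)$ to guarantee that $F(x)$ lands back in $U_j$, where $D_j \circ E_j$ acts as the identity. Unlike in Corollary~\ref{co:vf-autoencoding-small-time}, there is no flow to integrate and no compactly supported extension to arrange, so the discrete-time case is structurally simpler; I do not anticipate any genuine obstacle beyond this bookkeeping.
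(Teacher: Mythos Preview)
Your argument is correct and follows the same template as the paper's proof: set $j=i+1$, take $E=E_j$, $D=D_j$, build $G$ from $E_j\circ F\circ D_j$, and then invoke continuity of composition in the compact-open topology for the approximation statement. The only difference is cosmetic---you define $G\coloneqq E_j\circ F\circ D_j$ globally on $\R^\ell$, whereas the paper restricts this composition to $\cl{E(U_i\cap F^{-1}(U_i))}$ and then extends; your global definition is already $C^k$ and makes the extension step unnecessary.
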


\begin{Rem}\label{rem:dt-small-time-big-good-set}
	If $F$ restricts to a diffeomorphism of $M$, then $F|_M^{-1}$ sends measure zero sets to measure zero sets.
    Thus, since Theorem~\ref{th:positive} implies that $M\setminus \bigcup_{i\in \N} U_i$ is measure zero in $M$, so is  $M\setminus \bigcup_{i\in \N} F^{-1}(U_i)$.
    Since $U_i$ and $F^{-1}(U_i)$ are increasing with $i\in \N$, it follows that the complement of the set on which \eqref{eq:dt-autoencoding-small-time-1}, \eqref{eq:dt-autoencoding-small-time-2} hold can be made to have arbitrarily small measure with respect to any given continuous Riemannian metric on $M$ if $F|_M$ is a diffeomorphism $M\to M$.
\end{Rem}

\begin{Rem}\label{rem:extend-co:dt-autoencoding-small-time}
In Corollary~\ref{co:dt-autoencoding-small-time}, the conclusion \eqref{eq:dt-autoencoding-small-time-1} can be extended by replacing $G$, $F$ with $G^{\circ n}$, $F^{\circ n}$ for all $1\leq n \leq N$ for some fixed arbitrary $N\in \N$, at the cost of replacing $U_i \cap F^{-1}(U_i)$ in the following line  with $U_i \cap F^{-1}(U_i) \cap (F^{\circ 2})^{-1}(U_i) \cap \cdots \cap (F^{\circ N})^{-1}(U_i)$, and similarly for the conclusion \eqref{eq:dt-autoencoding-small-time-2}. 
Modifying Remark~\ref{rem:dt-small-time-big-good-set} accordingly then yields the same qualitative conclusion for arbitrarily large but finite numbers of iterations of the maps $F$ and $G$. 
\end{Rem}

\begin{proof}
	Let $j\coloneqq i+1$ and let $D\coloneqq D_{j}$, $E\coloneqq E_j$ be as in Theorem~\ref{th:positive}.
	Given $F$, define $V_i\coloneqq U_i \cap F^{-1}(U_i)$, $W_i\coloneqq E(V_i)$, and $G\colon \R^\ell \to \R^\ell$ to be any smooth extension of $ E_j\circ F\circ D_j|_{\cl{W_i}}\colon \cl{W_i} \to \R^\ell$.
	By Theorem~\ref{th:positive}, 
\begin{align*}
		D\circ G\circ E|_{V_i} 
&
        =  D\circ  E_j\circ F\circ D_j|_{\cl{W_i}}\circ E|_{V_i}
\\
&
        = E|_{U_i}^{-1}\circ E|_{U_i}\circ F\circ E|_{V_i}^{-1}\circ E|_{V_i}
\\
&
        = F,
\end{align*}
	which is \eqref{eq:dt-autoencoding-small-time-1}. 	
	Finally, \eqref{eq:dt-autoencoding-small-time-2} follows from \eqref{eq:dt-autoencoding-small-time-1} and the fact that composition of continuous functions is continuous with respect to the compact-open topologies \cite[p.~64, Ex.~10(a)]{hirsch1994differential}.
\end{proof}

On the other hand, the following consequences of Theorem~\ref{th:positive} are results for autoencoding continuous-time dynamical systems for \emph{large} times, but at the cost of the extra topological assumption that $M$ is a closed manifold having a smooth Euclidean covering space.
The  restrictiveness of this assumption is discussed in the remark below (recall we are assuming that $M$ is connected).
\begin{Rem}
With the exceptions of $\sph^2$ and $\R P^2$, any closed manifold $M$ of dimension $m\leq 2$ has a smooth Euclidean covering space.
But in general,  for a manifold $M$ to have a Euclidean  covering space, it is necessary that $M$ be \concept{aspherical}, meaning that all higher homotopy groups of $M$ vanish: $\pi_k(M)=\{*\}$ for all $k > 1$.
This condition is not sufficient, however, as there exist  aspherical closed manifolds not having a Euclidean covering space \cite{belegradek2015open}.
On the other hand, one sufficient condition for an aspherical closed smooth manifold $M$ to have a smooth Euclidean covering space is that $m\neq 3,4$ and its fundamental group $\pi_1(M)$ contains a finitely generated non-trivial abelian subgroup; this is \cite[Thm~1]{lee1975manifolds} combined with the fact that $\R^m$ has no exotic smooth structures for $m\neq 4$.
Another sufficient condition for a closed smooth manifold $M$ to have a smooth Euclidean covering space is that $M$ admit a Riemannian metric of nonpositive sectional curvature, by the Cartan-Hadamard theorem \cite[Thm~V.4.1]{sakai1996riemannian}. 
\end{Rem}

The next theorem is for continuous-time systems and the subsequent theorem is for discrete time.

\begin{Th}\label{th:vf-autoencoding-large-time-cover}
	Assume that $m = \ell$ and $M$ is a closed manifold for which there exists a smooth covering map $D\colon \R^\ell \to M$.
	Let $U_1,U_2,\ldots$ be as in Theorem~\ref{th:positive}.
	Let $f$ be a locally Lipschitz vector field on $M$ with flow $\Phi_f$.
	Then there is a complete locally Lipschitz vector field $g$ on $\R^\ell$ with flow $\Phi_g$ such that, for any $i\in \N$, there is a smooth map $E\colon \R^n\to \R^\ell$ (independent of $f$ and $g$) satisfying  $D\circ E|_{U_i} = \id_{U_i}$ and
	\begin{equation}\label{eq:vf-autoencoding-large-time-1}
		D\circ \Phi_g^t\circ E(x) = \Phi_f^t(x)
	\end{equation}
	for all $x\in U_i$ and $t\in \R$.
	Moreover, $g$ is $C^k$ if $f$ is $C^k$ with $k\in \N_{\geq 1}\cup \{\infty\}$.
	
	In particular, for any subsets
\optionaldisplay{\E\subset C(\R^n,\R^\ell), \;\D\subset C(\R^\ell,\R^n), \;\V \subset C^{0,1}(\R^\ell, \R^\ell)\subset C(\R^\ell,\R^\ell)}
that are dense in the compact-open topologies and any given $\varepsilon, T > 0$, there are $\tilde{E}\in \E$, $\tilde{D}\in \D$, $\tilde{g}\in \V$ such that
	\begin{equation}\label{eq:vf-autoencoding-large-time-2}
		\|\tilde{D}\circ \Phi_g^t \circ \tilde{E}(x) - \Phi_f^t(x)\|< \varepsilon
	\end{equation} 
	for all $x\in U_i$ and $t\in [-T,T]$.
\end{Th}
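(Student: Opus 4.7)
The plan is to construct $g$ as the canonical lift of $f$ along the covering map $D$, construct $E$ as a smooth extension of a section of $D$ over $\cl{U_i}$, and then deduce the commuting square from the $D$-relatedness of $g$ and $f$. The density statement will then follow by the same compact-open continuity argument used in the proof of Corollary~\ref{co:vf-autoencoding-small-time}.

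For the vector field $g$: since $D\colon \R^\ell\to M$ is a smooth covering map, it is in particular a local diffeomorphism, so I would define $g(y)\coloneqq (dD_y)^{-1}\bigl(f(D(y))\bigr)$ pointwise on $\R^\ell$. This $g$ inherits the regularity of $f$ (locally Lipschitz, or $C^k$), and by construction $g$ is $D$-related to $f$. Compactness of $M$ makes $f$ complete; to upgrade this to completeness of $g$, I would fix $y_0\in\R^\ell$, consider the integral curve $\gamma(t)\coloneqq \Phi_f^t(D(y_0))$, which is defined on all of $\R$, and apply the path-lifting property of the smooth covering $D$ to obtain a smooth lift $\tilde\gamma\colon \R\to\R^\ell$ with $\tilde\gamma(0)=y_0$ and $D\circ\tilde\gamma=\gamma$. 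Differentiating and using $D$-relatedness shows $\tilde\gamma$ is an integral curve of $g$, so by uniqueness $\tilde\gamma(t)=\Phi_g^t(y_0)$ is defined for all $t\in\R$. The same identity yields the global conjugation $D\circ\Phi_g^t=\Phi_f^t\circ D$, which is the key relation for the commuting square. Note that $g$ is constructed once and for all from $f$, independent of $i$.

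For the encoder $E$: Theorem~\ref{th:positive} tells me $\cl{U_i}$ is contractible, so in particular simply connected. The section-lifting criterion for covering maps then furnishes a continuous section $s\colon \cl{U_i}\to\R^\ell$ of $D$, and since $D$ is a local diffeomorphism this $s$ is automatically smooth. Because $\cl{U_i}$ is a compact subset of the smooth embedded submanifold $M\subset\R^n$, I would extend $s$ first to a smooth map on a tubular neighborhood of $\cl{U_i}$ in $\R^n$ and then, via a smooth cutoff, to a smooth map $E\colon\R^n\to\R^\ell$ satisfying $D\circ E|_{U_i}=\id_{U_i}$. The construction depends only on $i$ and on the fixed covering map $D$, so $E$ is indeed independent of $f$ and $g$, as required.

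With $g$ and $E$ in hand, the commuting identity is immediate: for $x\in U_i$, setting $y\coloneqq E(x)$ gives $D(\Phi_g^t(E(x)))=\Phi_f^t(D(y))=\Phi_f^t(x)$ for every $t\in\R$, which is \eqref{eq:vf-autoencoding-large-time-1}. The approximation statement \eqref{eq:vf-autoencoding-large-time-2} would then follow from \eqref{eq:vf-autoencoding-large-time-1} by the same compact-open continuity argument used in Corollary~\ref{co:vf-autoencoding-small-time}, now applied over the compact set $U_i\times[-T,T]$, on which the exact flow is defined thanks to completeness of $g$. The step I expect to be the main obstacle is precisely this completeness: it is where the hypothesis that $\R^\ell$ covers $M$ smoothly (rather than $M$ merely being locally Euclidean) is genuinely used, via the fact that every maximal integral curve of $f$ on the compact base $M$ lifts to a curve in the Euclidean total space $\R^\ell$. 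Once completeness and $D$-relatedness are in place, everything else is either a direct consequence of Theorem~\ref{th:positive} (contractibility of $\cl{U_i}$) or a standard extension/density manipulation.
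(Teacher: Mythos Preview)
Your proposal is correct and follows essentially the same approach as the paper: lift $f$ along the covering map to obtain $g$, deduce completeness of $g$ from the path-lifting property, use contractibility (you use $\cl{U_i}$ directly; the paper takes a section over the open $U_{i+1}\supset \cl{U_i}$ and then restricts) to produce a smooth local section of $D$ that extends to $E$, and finish with the compact-open continuity argument. One small slip to fix: $U_i$ is open, not compact, so in the final density step you should work over the compact set $\cl{U_i}\times[-T,T]$ (your section $s$ is defined on $\cl{U_i}$, so \eqref{eq:vf-autoencoding-large-time-1} holds there as well).
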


\begin{proof}
	Note that $f$ is complete since $M$ is compact.
	Let $g$ be the unique lift of $f$ via $D$, i.e., $g(y)=(d_yD)^{-1}(f(D(y))$.
	From the latter formula it is clear that $g$ is $C^k$ if $f$ is and (also using the chain rule and Picard-Lindel\"{o}f theorem) that $D$ maps trajectories of $g$ to those of $f$, i.e., 
	\begin{equation}
		D(\Phi_g^t(y)) = \Phi_f^t (D(y))
	\end{equation}
	for all $y\in \R^\ell$ and $t\in \R$ such that the left side is defined.
	But the left side is defined for \emph{all}  $y$ and $t$ since $f$ is complete and hence the path lifting property of covering maps \cite[p.~60]{hatcher2002algebraic} implies that $g$ is  also complete.
	
	Next, let $j \coloneqq i+1$.
	Since $U_{j}$ is contractible and $D$ is a smooth covering map, there exists a smooth map $E_{j}\colon U_j\to \R^\ell$ that is a local section of $D$, i.e., $D\circ E_j = \id_{U_j}$.
	It follows that \eqref{eq:vf-autoencoding-large-time-1} is satisfied for all $x\in U_i$, $t\in \R$ with $E\colon \R^n\to \R^\ell$ defined to be any smooth extension of $E_j|_{\cl{U_j}}\colon \cl{U_j}\to \R^\ell$.
	Finally, \eqref{eq:vf-autoencoding-large-time-2} follows from \eqref{eq:vf-autoencoding-large-time-1} and the facts that $\Phi_f$ and $(t,y,g)\mapsto \Phi_g^t(y)$ are continuous and composition of continuous functions is continuous with respect to the compact-open topologies \cite[p.~64, Ex.~10(a)]{hirsch1994differential}.
\end{proof}

While Corollary~\ref{co:dt-autoencoding-small-time} is not an entirely satisfactory analogue of Corollary~\ref{co:vf-autoencoding-small-time},  Theorem~\ref{th:vf-autoencoding-large-time-cover} \emph{does} have the following satisfactory  generalization to the discrete-time case.

\begin{Th}\label{th:dt-autoencoding-large-time-cover}
	Assume that $m = \ell$ and $M$ is a closed manifold for which there exists a smooth covering map $D\colon \R^\ell \to M$.
	Let $U_1,U_2,\ldots$ be as in Theorem~\ref{th:positive}.
	Let $F\colon M\to M$ be a $C^k$ map with $k\in \N_{\geq 0}\cup \{\infty\}$.
	Then there is a $C^k$ map $G\colon \R^\ell \to \R^\ell$ such that, for any $i\in \N$, there is a smooth map $E\colon \R^n\to \R^\ell$ (independent of $F$ and $G$) satisfying  $D\circ E|_{U_i} = \id_{U_i}$ and
	\begin{equation}\label{eq:dt-autoencoding-large-time-1}
		D\circ G^{\circ n}\circ E(x) = F^{\circ n}(x)
	\end{equation}
	for all $x\in U_i$ and $n\in \N$.
	Moreover, $G$ is a homeomorphism (resp. diffeomorphism) if $F$ is, and $G$ is a local homeomorphism (resp. local diffeomorphism) if $F$ is.
	
	In particular, for any subsets
\optionaldisplay{\E\subset C(\R^n,\R^\ell), \;\D\subset C(\R^\ell,\R^n), \;\mathcal{G} \subset C(\R^\ell,\R^\ell)}
that are dense in the compact-open topologies and any given $\varepsilon > 0$ and $N\in \N$, there are $\tilde{E}\in \E$, $\tilde{D}\in \D$, $\tilde{G}\in \mathcal{G}$ such that
	\begin{equation}\label{eq:dt-autoencoding-large-time-2}
		\|\tilde{D}\circ \tilde{G}^{\circ n} \circ \tilde{E}(x) - F^{\circ n}(x)\|< \varepsilon
	\end{equation} 
	for all $x\in U_i$ and $n\in \{0,\ldots, N\}$.
\end{Th}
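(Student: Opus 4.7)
The plan is to mirror the proof of Theorem~\ref{th:vf-autoencoding-large-time-cover}, but replacing the lifting of a vector field by the lifting of a map through the covering $D$. The central observation is that since $\R^\ell$ is simply connected, the standard covering-space lifting criterion \cite[Prop.~1.33]{hatcher2002algebraic} applied to the continuous map $F \circ D \colon \R^\ell \to M$ (with any choice of basepoints) produces a unique continuous lift $G \colon \R^\ell \to \R^\ell$ satisfying $D \circ G = F \circ D$. Because $D$ is a smooth covering (hence a local diffeomorphism), locally $G = (D|_V)^{-1} \circ F \circ D$ for an appropriate sheet $V$, so $G$ inherits $C^k$ regularity from $F$.

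Next I would iterate the intertwining relation. A trivial induction using $D \circ G = F \circ D$ gives $D \circ G^{\circ n} = F^{\circ n} \circ D$ for every $n \in \N$. For the encoder, I would take $j \coloneqq i+1$ and, exactly as in the proof of Theorem~\ref{th:vf-autoencoding-large-time-cover}, use contractibility of $U_j$ together with the covering property of $D$ to construct a smooth local section $E_j \colon U_j \to \R^\ell$ with $D \circ E_j = \id_{U_j}$. Since $\cl{U_i} \subset U_j$, I would then define $E \colon \R^n \to \R^\ell$ to be any smooth extension of $E_j|_{\cl{U_i}}$. Combining these pieces,
\begin{equation*}
D \circ G^{\circ n} \circ E(x) \;=\; F^{\circ n} \circ D \circ E(x) \;=\; F^{\circ n}(x)
\end{equation*}
for all $x \in U_i$ and $n \in \N$, which is \eqref{eq:dt-autoencoding-large-time-1}. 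Note that $E$ depends only on $D$ and on $i$, independent of $F$ and $G$, as required.

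The step I expect to require the most care is verifying the homeomorphism/diffeomorphism clauses. For the \emph{global} case, suppose $F$ is a homeomorphism (resp.\ diffeomorphism). Fix $y_0 \in \R^\ell$, set $z_0 \coloneqq G(y_0)$, and let $H \colon \R^\ell \to \R^\ell$ be the unique lift of $F^{-1} \circ D$ through $D$ with $H(z_0) = y_0$. Then $D \circ (G \circ H) = F \circ D \circ H = D$ and $G \circ H(z_0) = z_0$, so $G \circ H = \id_{\R^\ell}$ by uniqueness of basepoint-respecting lifts; an identical argument gives $H \circ G = \id_{\R^\ell}$. Since $H$ inherits the same regularity as $F^{-1}$, this shows $G$ is a diffeomorphism whenever $F$ is. The \emph{local} homeomorphism/diffeomorphism claim is easier: using the local identity $G = (D|_V)^{-1} \circ F \circ D$, $G$ is locally a composition of (local) homeomorphisms (resp.\ diffeomorphisms) whenever $F$ is.

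Finally, the density conclusion \eqref{eq:dt-autoencoding-large-time-2} follows from \eqref{eq:dt-autoencoding-large-time-1} and the continuity of composition in the compact-open topology on compact sets \cite[p.~64, Ex.~10(a)]{hirsch1994differential}, applied to the compact set $U_i$ (or rather its closure) and to the finitely many iterates $n = 0, 1, \ldots, N$; one picks $\tilde{E}, \tilde{D}, \tilde{G}$ sufficiently close to $E, D, G$ in the compact-open topology so that each of the compositions $\tilde{D} \circ \tilde{G}^{\circ n} \circ \tilde{E}$ is uniformly within $\varepsilon$ of $D \circ G^{\circ n} \circ E = F^{\circ n}$ on $U_i$.
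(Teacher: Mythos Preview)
Your proof is correct and follows essentially the same approach as the paper: lift $F\circ D$ through the simply connected cover to obtain $G$, deduce $C^k$ regularity and the local/global invertibility claims from the local formula $G=\sigma\circ F\circ D$ together with uniqueness of lifts, construct $E$ as a smooth local section over the contractible $U_j$, and finish with continuity of composition in the compact-open topology. If anything, your write-up is more complete than the paper's, which omits the explicit construction of $E$ and the verification of \eqref{eq:dt-autoencoding-large-time-1} (implicitly deferring to the identical step in the proof of Theorem~\ref{th:vf-autoencoding-large-time-cover}), and your handling of basepoints in the homeomorphism argument is more careful than the paper's pointed diagrams, which as written tacitly require $F(x_0)=x_0$.
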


\begin{proof}
	Fix a basepoint $x_0\in M$ and fix a point $\tilde{x}_0\in D^{-1}(x_0)\subset \R^\ell$. 
	Since $\R^\ell$ is simply connected, the lifting criterion for the covering space $D\colon \R^\ell \to M$ \cite[Prop.~1.33]{hatcher2002algebraic} applied to the map $F\circ D\colon \R^\ell \to M$ furnishes a unique \cite[Prop.~1.34]{hatcher2002algebraic} continuous map $G\colon \R^\ell \to \R^\ell$ making the diagram
	\begin{equation}\label{eq:cd-F-G}
		\begin{tikzcd}
			(\R^\ell, \tilde{x}_0) \arrow[dashed, "G"]{r} \arrow["D"]{d} & (\R^\ell, \tilde{x}_0) \arrow["D"]{d}\\
			(M,x_0) \arrow["F"]{r} & (M,x_0)
		\end{tikzcd}
	\end{equation}
	commute in the sense of pointed spaces and maps.
	
	To see that $G$ is $C^k$ if $F$ is, it suffices to fix any $y\in \R^\ell$ and show that $G$ is $C^k$ on some neighborhood of $y$.
	Set $x = D(G(y))$.
	Since $D$ is a smooth covering map, $x$ has a relatively open neighborhood $U\subset M$ such that $D$ restricts to a diffeomorphism from each connected component of $D^{-1}(U)$ onto $U$.
	Fix such a connected component $V$ and let $\sigma\colon U\to V$ be the unique diffeomorphism inverting $D|_V\colon V\to U$.
	Then $V'\coloneqq G^{-1}(V)$ is an open neighborhood of $y$ and
\begin{equation*}
		G|_{V'} = \sigma \circ F \circ D|_{V'}, 
\end{equation*}
	so that $G|_{V'}$ and hence also $G$ are $C^k$ if $F$ is.
	The same argument shows that $G$ is a local homeomorphism (resp. local diffeomorphism) if $F$ is.
	
	To see that $G$ is a (global) homeomorphism if $F$ is, the same argument from the first paragraph of the proof furnishes a  continuous (and $C^k$ if $F$ is) map $H\colon \R^\ell \to \R^\ell$ making the diagram
	\begin{equation}\label{eq:cd-Finv-H}
	\begin{tikzcd}
		(\R^\ell, \tilde{x}_0) \arrow[dashed, "H"]{r} \arrow["D"]{d} & (\R^\ell, \tilde{x}_0) \arrow["D"]{d}\\
		(M,x_0) \arrow["F^{-1}"]{r} & (M,x_0)
	\end{tikzcd}
\end{equation}
commute in the sense of pointed spaces and maps.
	Horizontally concatenating the diagrams \eqref{eq:cd-F-G}, \eqref{eq:cd-Finv-H} in both orders yields the pair of pointed commutative diagrams 
		\begin{equation*}
		\begin{tikzcd}
			(\R^\ell, \tilde{x}_0)\arrow[dashed, "G\circ H"]{r} \arrow["D"]{d} & (\R^\ell, \tilde{x}_0) \arrow["D"]{d}\\
			(M,x_0) \arrow["\id_M"]{r} & (M,x_0)
		\end{tikzcd}
		\quad \textnormal{and} \quad
		\begin{tikzcd}
			(\R^\ell, \tilde{x}_0) \arrow[dashed, "H\circ G"]{r} \arrow["D"]{d} & (\R^\ell, \tilde{x}_0) \arrow["D"]{d}\\
			(M,x_0) \arrow["\id_M"]{r} & (M,x_0)
		\end{tikzcd}.
	\end{equation*}
	Notice that these latter diagrams would also commute in the pointed sense if $G\circ H$ and $H\circ G$ were both replaced by $\id_{\R^\ell}$.
	Thus, the uniqueness portion \cite[Prop.~1.34]{hatcher2002algebraic} of the lifting criterion for covering spaces implies that
\begin{equation}\label{eq:H-is-Ginv}		
	G\circ H = \id_{\R^\ell} = H\circ G,
\end{equation}
	 so that $G$ is a homeomorphism with $G^{-1}=H$ if $F$ is a homeomorphism.
	 And since $G$, $H$ are $C^k$ if $F$ is $C^k$,  
     equality
     \eqref{eq:H-is-Ginv} 
     also implies that $G$ is a $C^k$ diffeomorphism if $F$ is.
	 
 	Finally, \eqref{eq:dt-autoencoding-large-time-2} follows from \eqref{eq:dt-autoencoding-large-time-1} and the fact that composition of continuous functions is continuous with respect to the compact-open topologies \cite[p.~64, Ex.~10(a)]{hirsch1994differential}.
\end{proof}

\begin{Rem}\label{rem:GAS}
Suppose that $x_0\in M$ is an asymptotically stable equilibrium of a complete vector field $f$, and let $\mathcal{D}$ be the domain of attraction of $x_0$. As a submanifold, $\mathcal{D}$ is diffeomorphic to $\R^m$. Moreover, there is a \textit{global} (on $\mathcal{D}$) topological conjugacy to a \textit{linear} system in $\R^m$. This construction can be interpreted as a perfect autoencoder---restricted to $\mathcal{D}$---such that the dynamics in the latent space are linear.  Furthermore, this linearizing encoder/decoder pair is in fact a $C^{k\geq 1}$ diffeomorphism on $\mathcal{D}\setminus\{x_0\}$ provided that the vector field is $C^k$ and the underlying space is not $5$-dimensional. (The $C^k$ statement in the $5$-dimensional case is equivalent to the still-open $4$-dimensional smooth Poincar\'{e} conjecture.) See~\cite{kvalheim_sontag_grobman} for details.
\end{Rem}

\section{Proof of Theorem~\ref{th:positive}}\label{sec:proofs}

We now restate and prove Theorem~\ref{th:positive}, which generalizes \cite[Thm~1]{kvalheim2024why}. 
Recall that we are assuming for simplicity that $M$ is connected.

\ThmPos*

\begin{proof}
The proofs of \cite[Lem.~1, 2]{kvalheim2024why} produce a closed subset $C\subset M$ and smooth embedding $E_0\colon M\setminus C\to \R^\ell$ such that $C$ is disjoint from $S$, $C$ is measure zero in $M$, and $C\cap \partial M$ is measure zero in $\partial M$.\footnote{These proofs take $C$ to be the finite union of ascending discs of positive codimension for a suitable polar Morse function on $M$ with negative gradient $F$.}
Moreover, the same proofs yield a point $p \in M$ and a smooth vector field $F$ on $M$ pointing inward at $\partial M$ whose induced  semiflow $\Phi\colon [0,\infty)\times M\to M$  is such that $\Phi^{t}(M\setminus C)\subset M\setminus C$ for all $t\geq 0$ and $p$ is a hyperbolic asymptotically stable equilibrium for $\Phi$ with basin of attraction $M\setminus C$.

Let $\Psi\colon \R\times \R^n\to \R^n$ be the flow of any compactly supported smooth vector field $G$ on $\R^n$ extending $F$, so that $\Psi|_{[0,\infty)\times M}=\Phi$.
Define $\tilde{M}\coloneqq \Psi^{-1}( M)\supset M$ and $\tilde{C}\coloneqq \Psi^{-1}(C)\supset C$.
Fix a smooth function $\rho\colon \R^n\to [0,\infty)$ satisfying $\rho^{-1}(0)=\partial \tilde{M}$.
Define the vector field $\tilde{F}\coloneqq (\rho G)|_{\tilde{M}}$ which, by construction and compactness of $\tilde{M}$, induces a well-defined smooth flow $\tilde{\Phi}\colon \R\times \tilde{M}\to \tilde{M}$.
Moreover, $p$ is asymptotically stable for $\tilde{\Phi}$ with basin of attraction 
\begin{equation}\label{eq:basin}
B= \tilde{M}\setminus (\tilde{C}\cup \partial \tilde{M}).	
\end{equation}

Since $p$ is hyperbolic, there is a smoothly embedded closed $m$-disc $V\subset M$ containing $p$ in its relative interior $\interiortM{V}$ such that $F|_V$ and hence also $G|_V$ are inward pointing at $\partial V$.
For each $i\in \N$, set 
$V_i\coloneqq \tilde{\Phi}^{-i}(V)$ so that $\interiortM{V_i}= \tilde{\Phi}^{-i}(\interiortM{V})$, and define
\begin{equation}\label{eq:U_i-def}
	U_i = M\cap \interiortM{V_i}.
\end{equation}
Note that each $U_i$ is relatively open in $M$ and that 
\begin{equation}\label{eq:cl-U_i-expression}
	\cl{U_i}=M\cap V_i.
\end{equation} 
Since $G|_V$ is inward pointing at $\partial V$, it follows that $V_i \subset \interiortM{V_{i+1}}$ and hence also $\cl{U_i} \subset U_{i+1}$ for all $i\in \N$.
And if $\partial M = \varnothing$, then $\tilde{M}=M$ and \eqref{eq:cl-U_i-expression} reduces to $\cl{U_i}=V_i$, which is a smooth manifold diffeomorphic to the $m$-disc $V$.

Observe that $M \setminus C = \bigcup_{i\in \N}U_i$ since each point in this set enters $V$ when flowing via $\Phi$ after some positive time,
and the fact that $S\subset M \setminus C$ is a finite set implies that there is $i_0\in \N$ such that $S\subset U_i$ for all $i> i_0$.
Thus, after discarding finitely many $U_i$, we may assume that $S\subset U_i$ for all $i$.

We now show that  $U_i$ and $\cl{U_i}$ are contractible (if $\partial M = \varnothing$, this is immediate from the disc statement above).
Since $\tilde{\Phi}^t(M)\subset M$ for all $t\geq 0$, \eqref{eq:basin} implies that $H\colon [0,1]\times (M\setminus C)\to M\setminus C$ defined by 
\begin{equation}\label{eq:homotopy}
	H(t,x)\coloneqq 
	\begin{cases}
		p, & t=1\\
		\tilde{\Phi}^{\frac{t}{1-t}}(x), & 0\leq t < 1	
	\end{cases}	
\end{equation}
is a well-defined homotopy from $\id_{M\setminus C}$ to the constant map $M\setminus C \to \{p\}$ yielding a deformation retraction of $M\setminus C$ to $\{p\}$.
And since $G|_V$ is inward pointing at $\partial V$, it follows that  $G|_{V_i}$ is inward pointing at $\partial V_i$.
Thus, $\tilde{\Phi}^t(V_i)\subset \interiortM{V_i}$ and hence also $\tilde{\Phi}^t(\cl{U_i})\subset \cl{U_i}$, $\tilde{\Phi}^t(U_i)\subset U_i$ for all $i\in \N$ and $t\geq 0$.
Thus, \eqref{eq:homotopy} restricts to well-defined strong deformation retractions of $U_i$ and of $\cl{U_i}$ to $\{p\}$ for all $i$, demonstrating that each $U_i$ and $\cl{U_i}$ are contractible.

Now fix $i\in \N$.
Let $E\colon \R^n\to \R^\ell$ be any smooth extension to $\R^n\to \R^\ell$ of $E_0|_{\cl{U_i}}$, and let $D\colon \R^\ell\to \R^n$ be any smooth extension of $E|_{\cl{U_i}}^{-1}\colon E(\cl{U_i})\to \cl{U_i}\subset \R^n$.
Then
\begin{align*}
\DE|_{U_i}
&
= 	E|_{\cl{U_i}}^{-1} \circ E|_{U_i} 
\\
&
= \id_{U_i},
\end{align*}
thus completing the proof,
\end{proof}

\section{A simple example}

We consider the dynamical system in $\R^2$ given by the following
vector field:
\[
f(x)=
\begin{bmatrix}
-2x_1x_2^2 \\[4pt]
\ 2x_1^2x_2
\end{bmatrix}
\]
Observe that the coordinate axes $x_1=0$ and $x_2=0$ constitute equilibria.
The data manifold $M$ is the unit circle \(S^1\), which is invariant
for the vector field since the normal at any point $(x_1,x_2)$ on the
circle is given by $(x_1,x_2)$, and $(x_1,x_2)^\top f(x) = 0$.
Using the polar angle $\theta$ as a coordinate, so
\(x_1=\cos\theta\) and \(x_2=\sin\theta\), the vector field
restricted to \(S^1\) is described by the differential equation
\[
\dot{\theta} = \sin(2\theta),\quad \theta\in[0,2\pi)\,.
\]
Observe that on \(S^1\) the vector field has equilibria at
\(\theta\in\{0,\tfrac{\pi}{2},\pi,\tfrac{3\pi}{2}\}\).
Linearization gives 
\[
\left.\frac{d}{d\theta}\dot\theta\right|_{\theta=\theta^\star} \;=\; 2\cos(2\theta^\star),
\]
so \(\theta=\tfrac{\pi}{2},\tfrac{3\pi}{2}\) (``north'' and ``south''
points respectively) are asymptotically stable,
and \(\theta=0,\pi\) (``east'' and ``west''
points respectively) are unstable.
Figure~\ref{fig:fig_flow_S1} illustrates the flow restricted to \(S^1\).

\begin{figure}[ht]
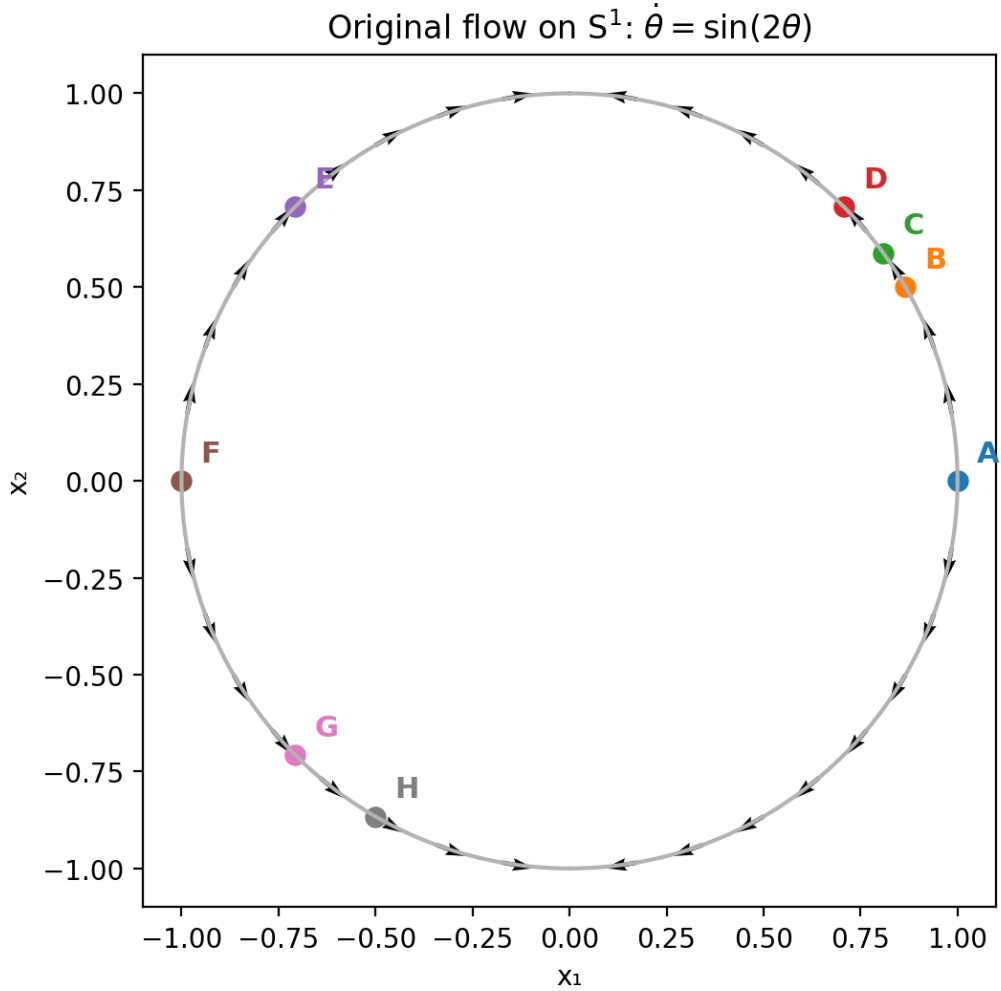

\picc{1}{fig_flow_S1.png}
\caption{Original flow when restricted to data manifold. We sampled \(\theta\) uniformly, drew the circle
  \(x(\theta)\), and overlaid normalized arrows \(f(x(\theta))\)
  tangential to the circle. A set of particular labeled points $A,\ldots,H$ are marked at their \((\cos\theta,\sin\theta)\) locations for
  future reference.} 
\label{fig:fig_flow_S1}
\end{figure}

\subsection{Autoencoding through $\R$}

Our objective is to autoencode the dynamics on the data manifold $S^1$
through the one-dimensional Euclidean latent space $L=\R$. For this
purpose, we train a deep neural network or ``multilayer perceptron''
(MLP) architecture with no prior information about the structure of
the data manifold, but using samples from trajectories.  (We will use
``$\phi$'' as the name of the state variable in $\R$ so as to remind
ourselves that this variable might for example represent a chart on
the data manifold.)

Thus, we want to learn a one-dimensional latent ODE
\( \dot{\phi}=h(\phi) \)
together with a neural encoder/decoder pair of maps
\(x\mapsto \phi=E(x), \ \hat{x}=D(\phi)\)
so that the latent vector field approximates the push-forward under
$E$ of the vector field on the unit circle and, conversely, the
push-forward under $D$ of the latent vector field approximates the
vector field on the unit circle.

The encoder \(E:\mathbb{R}^2\to\mathbb{R}\),
decoder \(D:\mathbb{R}\to\mathbb{R}^2\),
and latent ODE will each be given by an MLP.
While our theory results assure conjugacy missmatches that are
uniformly small on large domains, neural network training methods and software
work best for mean square (or similar) losses on entire domains,
in a way similar to what we described in Corollary~\ref{co:lp-intro}
for encoding the manifold itself. Therefore, we will employ the latter
in our algorithms.
Specifically, the parameters (weights) of these networks were obtained by minimizing
a loss function which is made up of three terms:
(1) a penalty on the data manifold reconstruction error
(encoding/decoding round trip should be close to the identify), 
(2) a penalty on the conjugacy mismatch between the push-forward of
the latent vector field and the vector field on $S^1$, and 
(3) a penalty on the mismatch between samples of the original dynamics
mapped under $E$ and samples obtained from the latent dynamics.
More details are given below.
(Of course, other penalties could be used, but this method was found
to work well.)

\subsection{Neural architectures and 2-step training}

\textbf{Encoder \(E:\mathbb{R}^2\to\mathbb{R}\):} This is an MLP with
\(\tanh\) activation and the following layers:
\[
2 \to 128 \to 128 \to 128 \to 1 \quad(\text{bias in all layers}).
\]
\textbf{Decoder \(D:\mathbb{R}\to\mathbb{R}^2\):} This is an MLP with
\(\tanh\) activation and the following layers:
\[
1 \to 128 \to 128 \to 2.
\]
\textbf{Latent ODE \(h:\mathbb{R}\to\mathbb{R}\):} This is an MLP with
\(\tanh\) activation and the following layers:
\[
1 \to 64 \to 64 \to 1.
\]

Instead of attempting to simultaneously both autoencode the manifold
and learn the dynamics, we found it more efficient to use a
\textit{two-phase} procedure: (i) autoencoder pretraining with only
reconstruction loss, and then (ii) gradual annealing of conjugacy and
latent one-step losses, while still maintaining the autoencoder property.

We generate data from the true dynamical system on \(S^1\)
$\dot{\theta} = \sin(2\theta)$, $\theta\in[0,2\pi)$
by discretizing with step \( \Delta t=0.04 \) and using one-step
forward Euler:
\[
\theta_{t+1}=\theta_t+\Delta t\,\sin(2\theta_t).
\]
(One could employ better solvers, but this approximation already works well.)
We sample \(512\) trajectories of length \(96\) with
initial conditions \(\theta_{i,0}\sim\mathrm{Unif}[0,2\pi)\), and store
\[
X[i,t]=(\cos\theta_{i,t},\sin\theta_{i,t})\in\mathbb{R}^2, \;
i\in\{1,\ldots, 512\}, \; t\in\{0,\ldots, 96\}
\]
and the following iterate pairs for each of the 512 trajectories:
\[
X_t=X[:,0{:}95],\quad X_{t+1}=X[:,1{:}96].
\]
In addition, eight particular initial conditions \(A\ldots H\) are
labeled and used for color-coded visualizations, as follows:
\[
\text{A}:0,\ \text{B}:\tfrac{\pi}{6},\ \text{C}:\tfrac{\pi}{5},\ \text{D}:\tfrac{\pi}{4},\
\text{E}:\tfrac{3\pi}{4},\ \text{F}:\pi,\ \text{G}:\tfrac{5\pi}{4},\ \text{H}:\tfrac{4\pi}{3}.
\]

\subsection{Results}

Figure~\ref{fig_phi_of_theta} shows the learned encoder, and the image
of the unit circle mapped into an interval in $\R$.
It is noteworthy that the map is one to one on a large part of the unit
circle, roughly on $[0.014,2\pi-0.014]$.
\begin{figure}[ht]
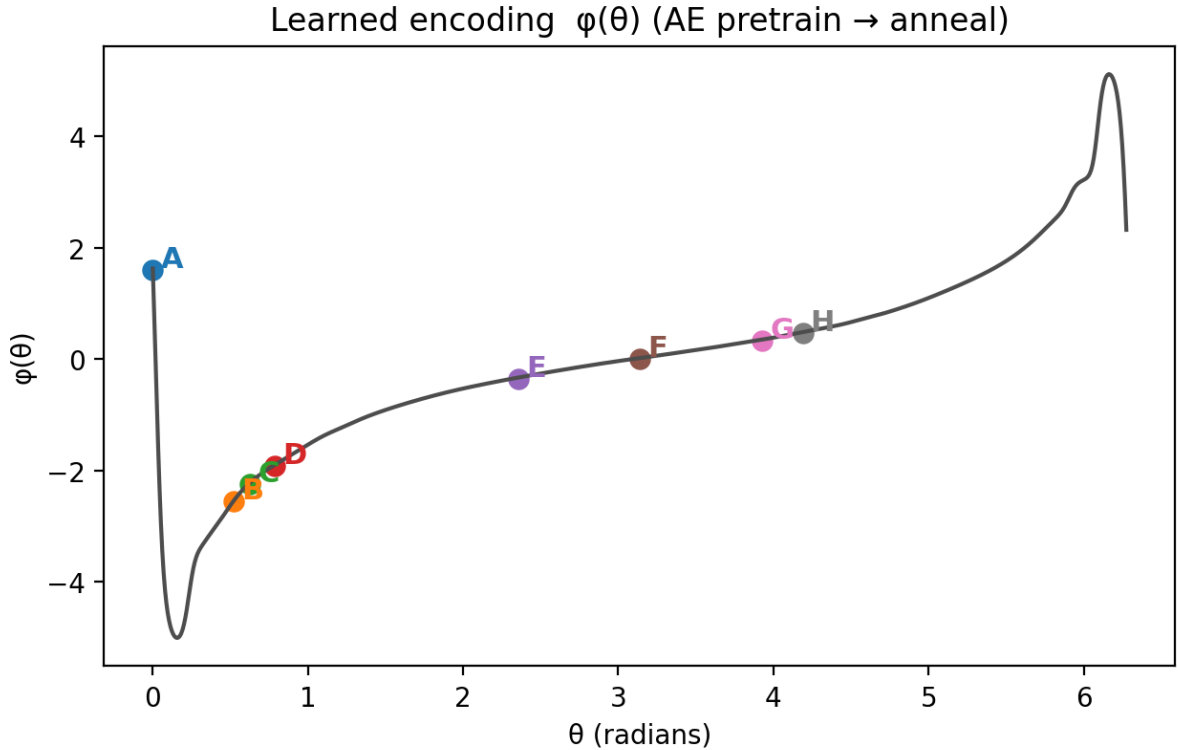

\picc{1}{fig_phi_of_theta.png}
  \caption{Learned encoder \(\phi(\theta)\) with labels A--H.}
\label{fig_phi_of_theta}
\end{figure}

Figure~\ref{fig_latent_vf} shows the computed latent vector
field. Note that at the points $B,C,D,G,H$ and at the point $E$ the
vector field is positive and negative respectively 
which is consistent with the counterclockwise and clockwise motions in
the data manifold, respectively.
The equilibrium at $F=\pi$ is mapped into an approximate equilibrium in the latent dynamics.
At the image of $A=0$ there should have been an equilibrium, but
instead the vector field points in the negative direction, which is
consistent with a cut near $A$ when building the encoder (compare
the start and end of the encoding map).
\begin{figure}[ht]
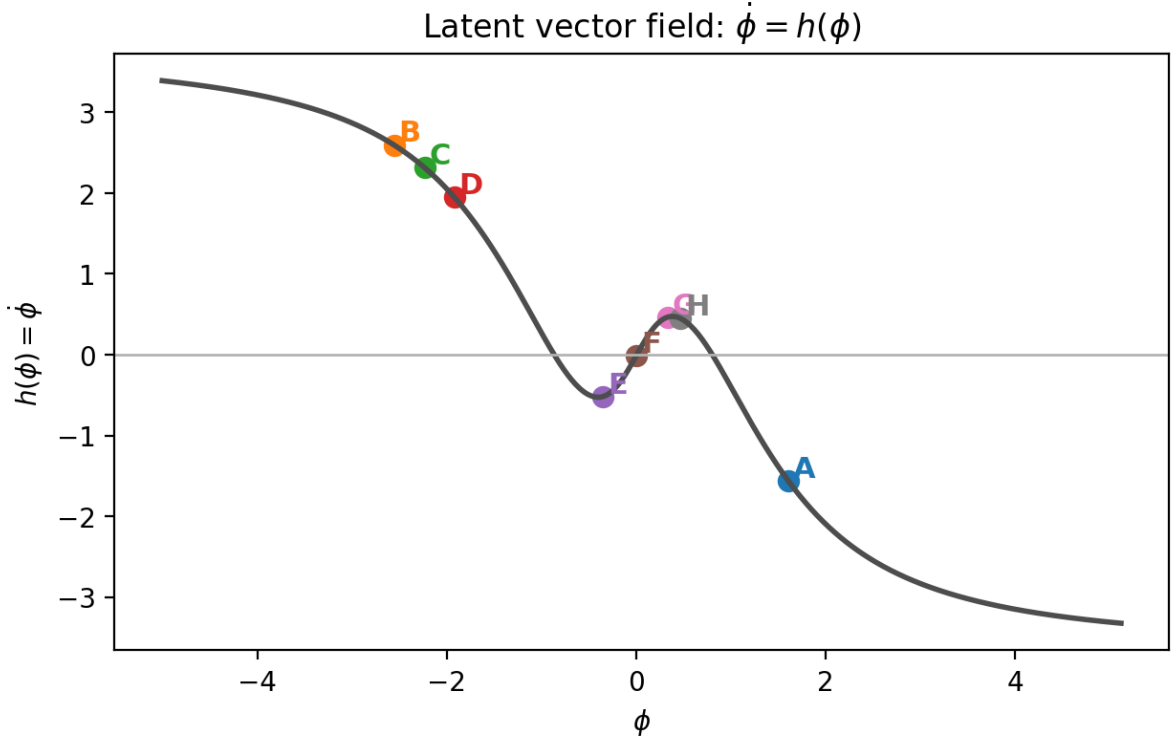

\picc{1}{fig_latent_vf.png}
\caption{Latent vector field \(h(\phi)\) with A--H overlays.}
\label{fig_latent_vf}
\end{figure}

Figure~\ref{fig_decoder_R2.png} shows an overlapping picture of the
data manifold $S^1$ (light gray) together with its image (black) under
the ``round trip'' map $x\mapsto D(E(x))$. Shown as well are the
images $A'$, $B'$, $\ldots$ of the various sample points
$A,B,\ldots$. Note the cut around $A=(0,1)$ (or, in polar angles, $A=0$).
\begin{figure}[ht]
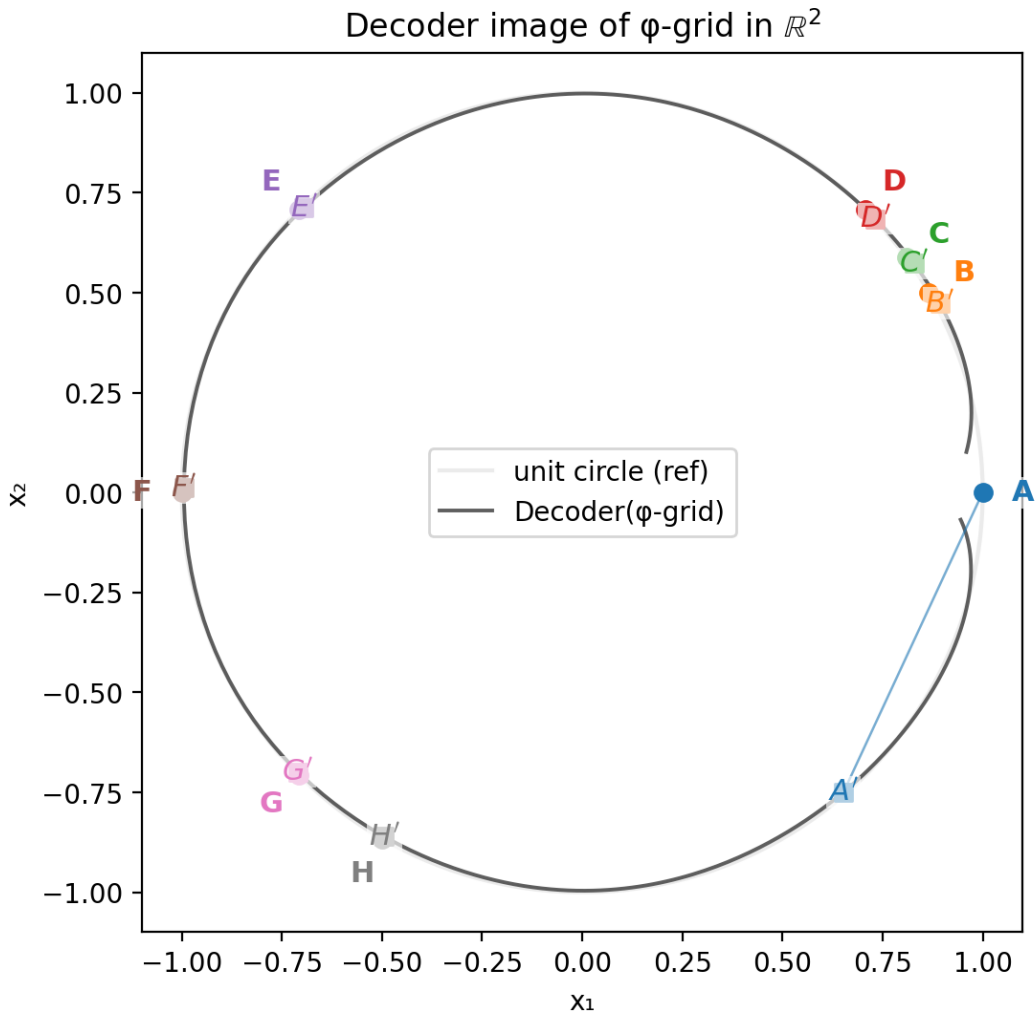

\picc{1}{fig_decoder_R2.png}
\caption{Decoder image \(D(\phi)\) in \(\mathbb{R}^2\). Small circles
 how the original points on the unit circle labeled $A\ldots H$, and
 small squares show the respective decoded points \(A'\ldots H'\).}
\label{fig_decoder_R2.png}
\end{figure}

Visually, it would appear in Figure~\ref{fig_decoder_R2.png}
that the images of $B,\ldots H$ lie exactly
on the unit circle, but this is not the case. The autoencoder only forces close
proximity to the circle.
In fact, The magnitudes of the points $A',\ldots H'$ are,
respectively, approximately:
\[
1.000953,
1.001267,
0.999459,
0.996480,
0.999774,
0.992573,
0.999803,
0.997436 .
\]
Figure~\ref{fig_decoder_radius} plots these magnitudes.
\begin{figure}[ht]
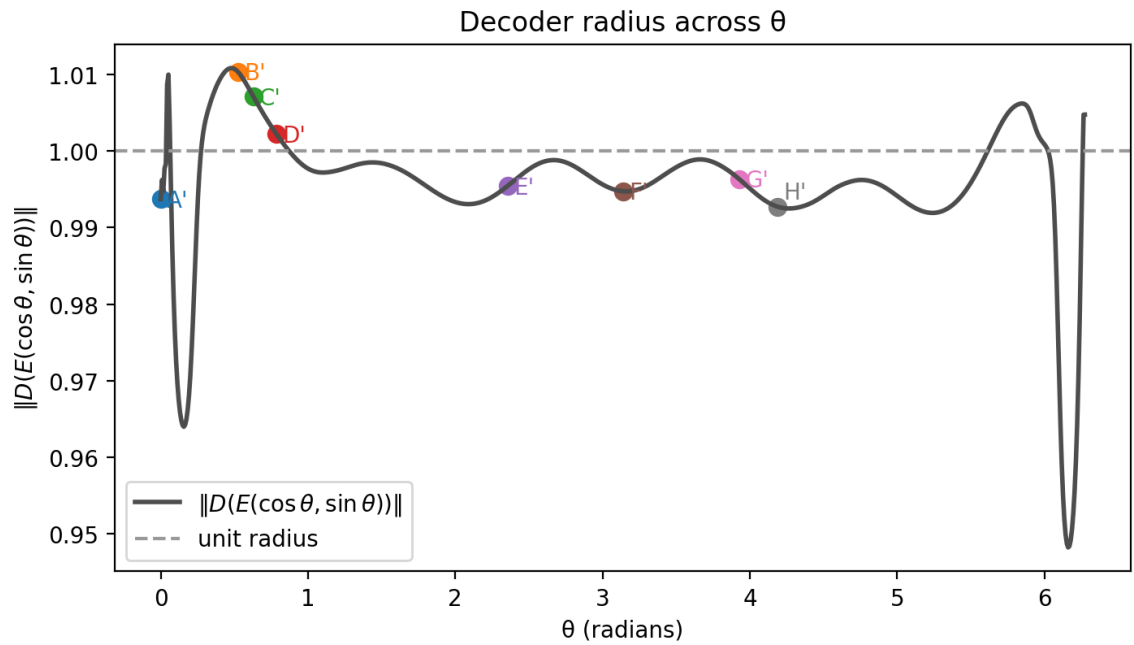

\picc{0.85}{fig_decoder_radius.png}
\caption{Decoder magnitude mapping.}
\label{fig_decoder_radius}
\end{figure}
  
Figure~\ref{fig_decoder_angle} shows the polar angles (projection onto
the unit circle) of the image of the decoder, to help understand the
decoder behavior.
\begin{figure}[ht]
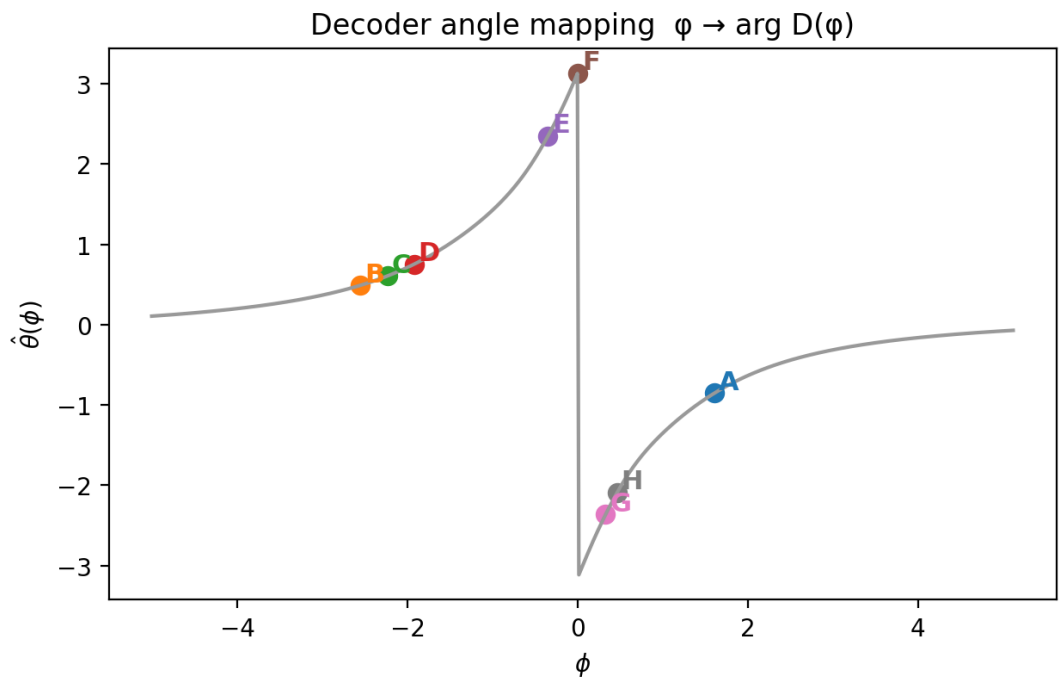

\picc{0.9}{fig_decoder_angle.png}
\caption{Decoder angle mapping \(\phi\mapsto\hat{\theta}(\phi)=\arg D(\phi)\).}
\label{fig_decoder_angle}
\end{figure}
  
Figure~\ref{fig_thetaVF_pulled_zoom} shows on the same window both the
original vector field and the pull-back of the latent vector field.
The reconstruction is almost perfect except near $A$.
\begin{figure}[ht]
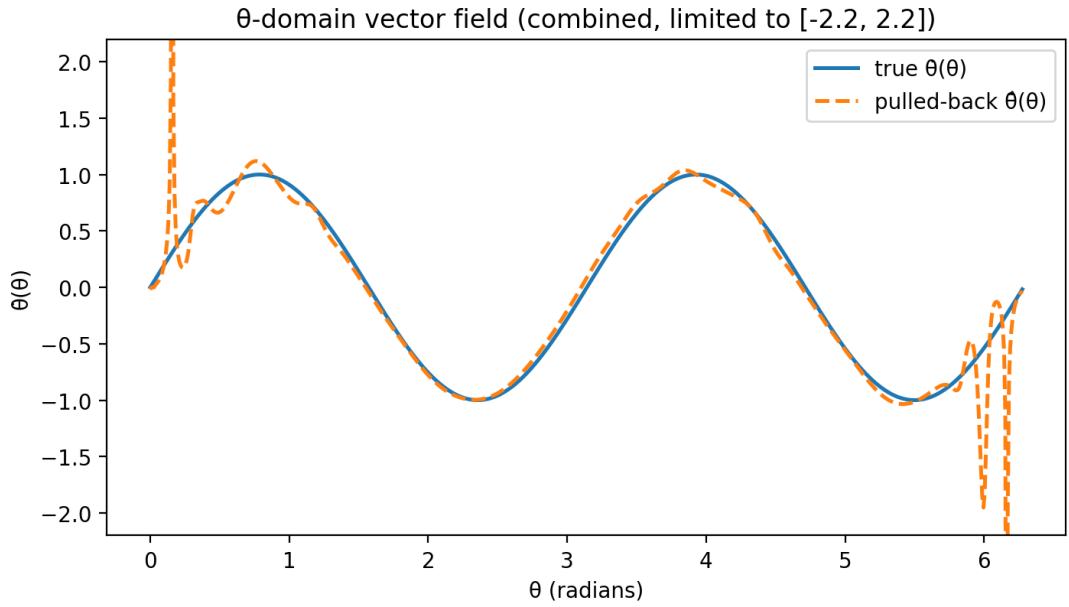

  \picc{0.8}{fig_thetaVF_pulled_zoom.png}
  \caption{Original vector field and pull-back of latent vector field.}
\label{fig_thetaVF_pulled_zoom}
\end{figure}

For legibility, overflow and underflow values in
Figure~\ref{fig_thetaVF_pulled_zoom} were truncated to the window
$[-1.2,1.2]$. The full pullback is shown in Figure~\ref{fig_thetaVF_pulled}. 
\begin{figure}[ht]
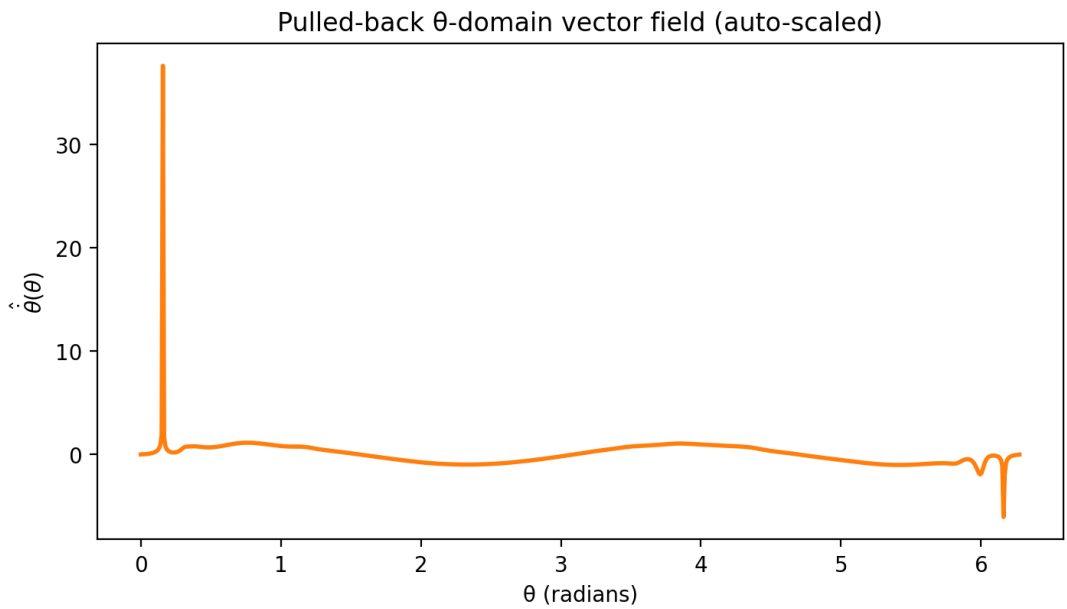

\picc{0.8}{fig_thetaVF_pulled.png}
\caption{
  Pull-back of latent vector field.}
\label{fig_thetaVF_pulled}
\end{figure}

Figure~\ref{fig_timeseries_AH} shows time series (true and encoded/decoded, as
well as image of trajectories in latent space).
\begin{figure}[ht]
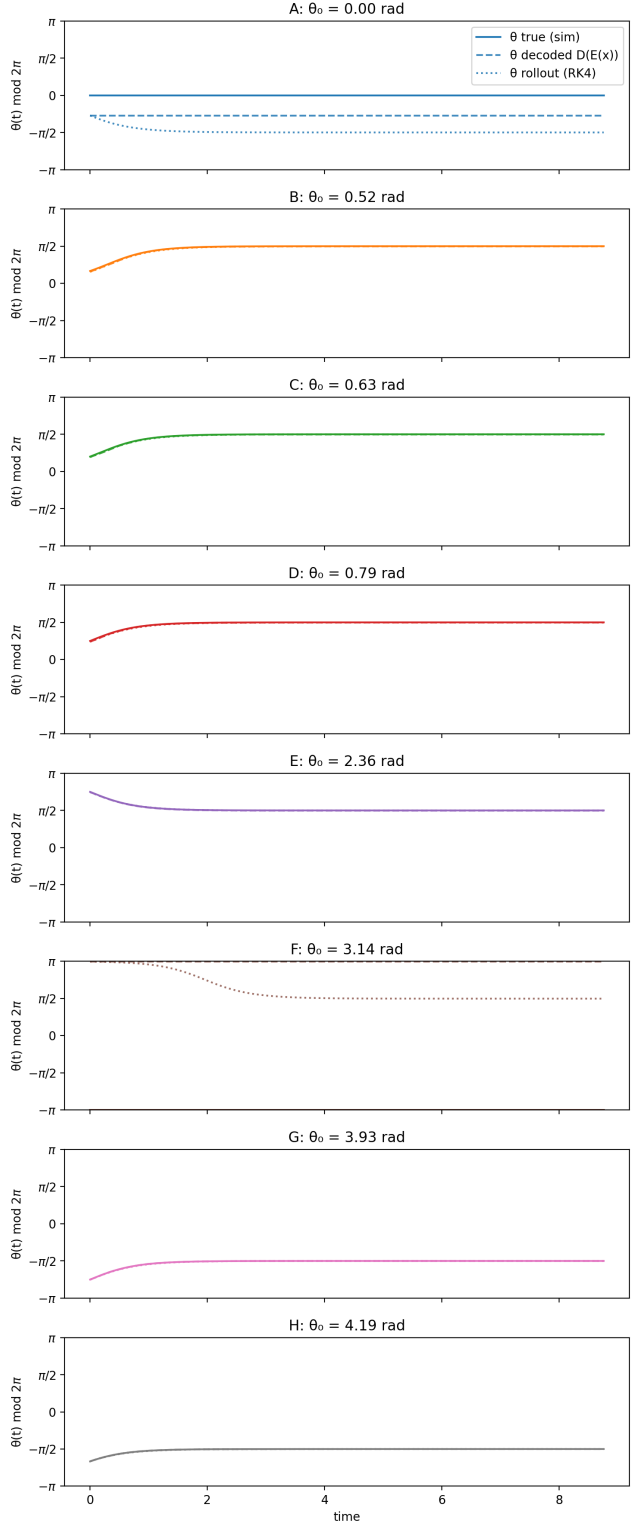

\picc{.45}{fig_timeseries_AH.png}
\caption{Time-series panels for A\ldots H (true / decoded / rollout).
For each tag \(\theta_0\):
\textit{true}: simulate \(\theta_t\) by Euler on \(\dot\theta=\sin(2\theta)\);
\textit{decoded}: at each true \(x_t=(\cos\theta_t,\sin\theta_t)\) compute \(\hat x_t=D(E(x_t))\) and take \(\hat\theta_t=\arg \hat x_t\);
\textit{rollout}: start from \(x_0\), encode \(\phi_0=E(x_0)\), integrate latent ODE by RK4 to \(\phi_t\), decode \(x^{\text{roll}}_t=D(\phi_t)\), and take \(\theta^{\text{roll}}_t=\arg x^{\text{roll}}_t\).
We plot each series \textit{wrapped modulo \(2\pi\)} into
\((-\pi,\pi]\).}
\label{fig_timeseries_AH}
\end{figure}

\clearpage
\subsection{Methods}

\subsubsection{Loss function}

The loss function has the following components.

\textit{(1) Reconstruction in \(\mathbb{R}^2\).}
\[
L_{\mathrm{rec}}=\|D(E(x))-x\|_2^2.
\]

\textit{(2) Conjugacy (push-forward match).}

Let \(\phi=E(x)\), \(\hat{x}=D(\phi)\).
Thus the recovered vector field is
\[
f(\hat{x})=
\begin{bmatrix}
-2\hat{x}_1\hat{x}_2^2 \\[4pt]
\ 2\hat{x}_1^2\hat{x}_2
\end{bmatrix}
\in\mathbb{R}^{2\times 1}.
\]
Finally, we compute \(J_D(\phi)\) by
PyTorch's automatic differentiation engine autograd, and define
\[
v_{\mathrm{push}}=J_D(\phi)\,h(\phi)\in\mathbb{R}^2.
\]
Then
\[
L_{\mathrm{conj}}=\|v_{\mathrm{push}}-f(\hat{x})\|_2^2.
\]

\textit{(3) Latent one-step (RK4).}
For pairs \((x_t,x_{t+1})\),
\[
\phi_t=E(x_t),\quad \phi_{t+1}^{\rm enc}=E(x_{t+1}),\quad
\phi_{t+1}^{\rm pred}=\mathrm{RK4}(\phi_t,h;\Delta t).
\]
Here the numerical integration (RK4) is given by a standard algorithm:
Given \(\phi_t\) and step \(\Delta t\),
\[
\begin{aligned}
k_1&=h(\phi_t),\quad
k_2=h(\phi_t+\tfrac12\Delta t\,k_1),\quad
k_3=h(\phi_t+\tfrac12\Delta t\,k_2),\\
k_4&=h(\phi_t+\Delta t\,k_3),\qquad
\phi_{t+1}=\phi_t+\tfrac{\Delta t}{6}(k_1+2k_2+2k_3+k_4).
\end{aligned}
\]
The third loss term is:
\[
L_{\mathrm{lat1}}=\|\phi_{t+1}^{\rm pred}-\phi_{t+1}^{\rm enc}\|_2^2.
\]

\subsubsection{Training}

The two-phase training (annealed) procedure was as follows.
We optimize
\[
\min\ \ W_{\mathrm{rec}}L_{\mathrm{rec}}+W_{\mathrm{conj}}L_{\mathrm{conj}}+W_{\mathrm{lat1}}L_{\mathrm{lat1}}.
\]
\textit{Phase 1 (AE pretrain)}: \(500\) epochs, \(W_{\mathrm{rec}}=15.0\), \(W_{\mathrm{conj}}=W_{\mathrm{lat1}}=0\), AdamW with \(lr=2\!\times\!10^{-3}\), weight decay \(10^{-5}\).
\textit{Phases 2--4 (anneal dynamics)}:
\[
\begin{array}{c|c|c|c|c}
\text{Phase} & \text{Epochs} & W_{\mathrm{rec}} & W_{\mathrm{conj}} & W_{\mathrm{lat1}} \\ \hline
2 & 250 & 10.0 & 0.5 & 0.2 \\
3 & 250 & 7.0  & 1.0 & 0.5 \\
4 & 250 & 5.0  & 2.0 & 0.8
\end{array}
\quad \text{(AdamW, }lr\in\{1.5\!\times\!10^{-3}, 10^{-3}, 10^{-3}\}\text{).}
\]
Batch size \(4096\). Default dtype float32.

\section{Conclusions and Discussion}

This work has examined the theoretical foundations of autoencoders as representations of dynamical systems evolving on low-dimensional manifolds. Motivated by the empirical success of data-driven methods that discover intrinsic state variables from high-dimensional observations, we have provided a rigorous framework showing that, under suitable conditions, the dynamics on a smooth manifold \(M\) of dimension \(k\) can be faithfully interlaced with a latent dynamical system on \(\mathbb{R}^k\) through an encoder–decoder pair. This result formalizes the intuition that autoencoders can learn intrinsic coordinates consistent with the underlying system’s evolution, while also clarifying the topological obstructions that preclude a global construction. In particular, we have shown that such interlacing mappings can exist only on large subsets of \(M\), excluding regions of small measure where smooth global parametrizations are impossible.

Beyond the dynamical setting, this paper also contributes to the broader mathematical understanding of autoencoders as manifold learning devices. By analyzing the geometric and topological limitations inherent in the search for encoder–decoder pairs, we have delineated the boundary between what can be achieved in principle and what must necessarily fail, even in the absence of noise or optimization constraints. These insights extend the conceptual foundation established in our earlier work~\cite{kvalheim2024why}, offering a unified theoretical perspective that encompasses both static and dynamical data.

The results presented here open several directions for further research. On the theoretical side, one may explore finer characterizations of the regions where interlacing maps fail to exist, or connect the constructions developed here to the theory of embeddings and immersions from differential topology. On the applied side, our findings motivate the design of architectures and training objectives that explicitly respect manifold structure and local coordinate consistency, potentially leading to more robust and interpretable latent-dynamics models. In bridging topology, geometry, and learning theory, this work underscores the deep connections between modern representation learning and classical ideas from dynamical systems and manifold theory.

Finally, 
a potential application of these results is in feedback design. Suppose that we want to build a smooth feedback law that stabilizes an equilibrium in $M$. By moving to the latent space, we can map the problem into one of stabilization for systems on Euclidean spaces, for which rich techniques exist, see e.g.~\cite{sontag1998mathematical}.
We leave such extensions to future work.

\subsection*{Acknowledgments}

This material is based upon work supported by the Air Force Office of Scientific Research under award number FA9550-24-1-0299 (MK) 
and 
the Office of Naval Research under award number N00014-21-1-2431 (EDS).
We thank Jeremy Jordan for alerting us to relevant statements made by
\cite{Goodfellow-et-al-2016}.

\newpage
\clearpage
\appendix
\section{Pseudocode}

\begin{algorithm}[H]
\caption{Generate trajectories on \(S^1\)}
\begin{algorithmic}[1]
\State \textbf{Inputs:} \(N{=}512\), \(T{=}96\), \(\Delta t{=}0.04\)
\For{$i=1$ to $N$}
  \State Sample $\theta_{i,0}\sim\mathrm{Unif}[0,2\pi)$
  \For{$t=0$ to $T{-}2$}
    \State $\theta_{i,t+1}\gets \theta_{i,t}+\Delta t\cdot \sin(2\theta_{i,t})$
  \EndFor
\EndFor
\State $X_{i,t}\gets(\cos\theta_{i,t},\sin\theta_{i,t})$
\State \Return $X$, $X_t=X[:,0{:}T{-}1]$, $X_{t+1}=X[:,1{:}T]$
\end{algorithmic}
\end{algorithm}

\begin{algorithm}[H]
\caption{Autoencoder pretraining}
\begin{algorithmic}[1]
\State Initialize $E,D,h$
\For{epoch $=1$ to $500$}
  \State Sample minibatch $x$ from $X$
  \State $\phi\gets E(x)$; $\hat{x}\gets D(\phi)$
  \State $L_{\rm rec}\gets \|\,\hat{x}-x\,\|_2^2$
  \State Update $\{E,D,h\}$ by AdamW on $15.0\cdot L_{\rm rec}$
\EndFor
\end{algorithmic}
\end{algorithm}

\begin{algorithm}[H]
\caption{Annealed dynamics training}
\begin{algorithmic}[1]
\For{phase in \{2,3,4\}}
  \State Set $(W_{\rm rec},W_{\rm conj},W_{\rm lat1}, lr)$ per schedule
  \For{epoch in phase-epochs}
    \State Sample minibatch $x$; compute $\phi=E(x)$, $\hat{x}=D(\phi)$
    \State $L_{\rm rec}=\|\hat{x}-x\|_2^2$
    \State Sample minibatch $x^c$; compute $L_{\rm conj}$ via $J_D(\cdot)h(\cdot)$ vs $f(D(\cdot))$
    \State Sample pairs $(x_t,x_{t+1})$; compute $L_{\rm lat1}$ with RK4
    \State $L = W_{\rm rec}L_{\rm rec}+W_{\rm conj}L_{\rm conj}+W_{\rm lat1}L_{\rm lat1}$
    \State Update $\{E,D,h\}$ by AdamW on $L$
  \EndFor
\EndFor
\end{algorithmic}
\end{algorithm}

\begin{algorithm}[H]
\caption{Rollout from $x_0$}
\begin{algorithmic}[1]
\State $\phi_0\gets E(x_0)$
\For{$t=0$ to $T{-}1$}
  \State $\phi_{t+1}\gets \mathrm{RK4}(\phi_t,h;\Delta t)$
  \State $x_{t+1}\gets D(\phi_{t+1})$;\quad $\hat{\theta}_{t+1}\gets \arg(x_{t+1})$
\EndFor
\end{algorithmic}
\end{algorithm}

\begin{algorithm}[H]
\caption{Compute pulled-back vector field}
\begin{algorithmic}[1]
\State Uniform grid $\theta_k$, $k=1,\dots,K$ (e.g., $K=720$)
\State $\phi_k\gets E(\cos\theta_k,\sin\theta_k)$
\State Approx.\ $\dv{\phi}{\theta}(\theta_k)$ by centered differences in $k$
\State $h_k\gets h(\phi_k)$
\State $\widehat{\dot{\theta}}(\theta_k)\gets h_k \big/ \left(\dv{\phi}{\theta}(\theta_k)\right)$
\end{algorithmic}
\end{algorithm}

\section{Reproducibility}

Implementation uses PyTorch (float32), NumPy, Matplotlib. Random seeds
are fixed. The code trains on CPU (AdamW, weight decay
\(10^{-5}\)). Hyperparameters, schedules, and integrator are specified
above. All figures arise directly from the provided script without
manual post-processing. 

Code available from:

{\tt https://github.com/sontaglab/autoencode\_dynamics\_example.git}

\newpage
\bibliographystyle{amsalpha}
\bibliography{ref}

\end{document}